\newif\ifarxiv
\renewenvironment{itemize}[1]{\begin{compactitem}#1}{\end{compactitem}}
\renewenvironment{enumerate}[1]{\begin{compactenum}#1}{\end{compactenum}}
\newcolumntype{N}{>{\global\let\currentrowstyle\relax}}
\newcolumntype{O}{>{\currentrowstyle}}
\title{Self-Attention Between Datapoints: Going Beyond Individual Input-Output Pairs in Deep Learning}
\author{%
  Jannik Kossen$^{1}$\thanks{\textbf{Equal Contribution.} Correspondence to \{jannik.kossen, neil.band\}@cs.ox.ac.uk.}
  \hfill
  Neil Band$^{1*}$ \\[1.5em]
  \bfseries
  Clare Lyle$^{1}$
  \hfill
  Aidan N.~Gomez$^{1,3}$
  \hfill
  Tom Rainforth$^{2}$
  \hfill
  Yarin Gal$^{1}$ \\[1.5em]
  $^1$ OATML, Department of Computer Science, University of Oxford \\
  $^2$ Department of Statistics, University of Oxford \\
  $^3$ Cohere
}
\newcommand{\calbm}[1]{\bm{#1}}
\newcommand{\Xb}{\calbm{X}}
\newcommand{\Qb}{\calbm{Q}}
\newcommand{\Kb}{\calbm{K}}
\newcommand{\Vb}{\calbm{V}}
\newcommand{\Ob}{\calbm{O}}
\newcommand{\Wb}{\calbm{W}}
\newcommand{\Hb}{\calbm{H}}
\newcommand{\Zb}{\calbm{Z}}
\newcommand{\btheta}{\bm{\theta}}
\newtheorem{property}{Property}[subsection]
\newtheorem{lemma}{Lemma}
\newtheorem{definition}{Definition}
\newtheorem{property-non}{Property}
\definecolor{pal0}{rgb}{0.00, 0.45, 0.70}
\definecolor{pal1}{rgb}{0.87, 0.56, 0.02}
\definecolor{pal2}{rgb}{0.01, 0.62, 0.45}
\definecolor{pal3}{rgb}{0.84, 0.37, 0.00}
\definecolor{pal4}{rgb}{0.80, 0.47, 0.74}
\definecolor{pal5}{rgb}{0.65, 0.34, 0.16}
\definecolor{pal6}{rgb}{0.97, 0.51, 0.75}
\definecolor{pal7}{rgb}{0.58, 0.58, 0.58}
\definecolor{pal8}{rgb}{0.87, 0.87, 0.00}
\definecolor{pal9}{rgb}{0.34, 0.71, 0.91}
\definecolor{Gray}{gray}{0.9}
\begin{document}
\doparttoc %
\faketableofcontents %
\part{} %

\setcitestyle{numbers,open={[},close={]}}
 
\maketitle

\begin{abstract}
We challenge a common assumption underlying most supervised \emph{deep learning}: that a model makes a prediction depending only on its parameters and the features of a \emph{single input}.
To this end, we introduce a general-purpose deep learning architecture that takes as input the \emph{entire dataset} instead of processing one datapoint at a time.
Our approach uses self-attention to reason about relationships between datapoints explicitly, which can be seen as realizing non-parametric models using parametric attention mechanisms.
However, unlike conventional non-parametric models, we let the model learn end-to-end from the data how to make use of other datapoints for prediction.
Empirically, our models solve cross-datapoint lookup and complex reasoning tasks unsolvable by traditional deep learning models.
We show highly competitive results on tabular data, early results on CIFAR-10, and give insight into how the model makes use of the interactions between points.

\end{abstract}
\section{Introduction} \label{sec:intro}
From CNNs \citep{lecun1998} to Transformers \citep{NIPS2017_transformer}, most of supervised deep learning relies on \emph{parametric} modeling:
models learn parameters $\btheta$ from a set of training data $\mathcal{D}_\textup{train}=\{(\bm{x}_1, \bm{y}_1), \dots, (\bm{x}_n, \bm{y}_n)\}$ to maximize training likelihoods $p(\bm{y}\mid\bm{x};\btheta)$ mapping from features~$\bm{x}\in\mathcal{\bm{X}}$ to target values~$\bm{y}\in \mathcal{\bm{Y}}$.
At test time, they then make a prediction $p(\bm{y}^*\mid \bm{x}^*; \btheta)$ that depends only on those parameters $\btheta$ and the test input~$\bm{x}^*$.
That is, parametric models do not consider direct dependencies between datapoints.

This paper challenges parametric modeling as the dominant paradigm in deep learning.
Based on the same end-to-end learning motivations that underpin deep learning itself, we consider giving models the \emph{additional flexibility} of using training data \emph{directly} when making predictions $p(\bm{y}^*\mid \bm{x}^*, \mathcal{D}_\textup{train}; \btheta)$.

Concretely, we introduce \textbf{Non-Parametric Transformers (NPTs)}: a general deep learning architecture that takes the entire dataset as input and predicts by explicitly \emph{learning} interactions between datapoints (\cref{fig:high_level}).
NPTs leverage both parametric and \emph{non}-parametric predictive mechanisms, with the use of end-to-end training allowing the model to naturally learn from the data how to balance the two.
Namely, instead of just learning predictive functions from the features to the targets of independent datapoints, NPTs can also learn to reason about general relationships \emph{between} inputs.
We use multi-head self-attention \cite{bahdanau2016nmt,NIPS2017_transformer,lee2019set} to model relationships between datapoints and construct a training objective for NPTs with a stochastic masking mechanism inspired by self-supervised reconstruction tasks in natural language processing \cite{devlin2019bert}.
We show that these models \emph{learn} to look up information from other datapoints and capture the causal mechanism generating the data in semi-synthetic settings.
However, unlike conventional non-parametric models, NPTs are not forced to \emph{only} make predictions in this way: they can also use the power of ordinary parametric deep learning.

\paragraph{Background.} While questioning parametric modeling assumptions is unconventional in deep learning, in statistics, so-called \emph{non-parametric} models are a well-known and long-established field of study.
Non-parametric models make predictions in explicit dependence of the training data $p(\bm{y}^*\mid \bm{x}^*, \mathcal{D}_\textup{train})$.
The most popular example of such models in the machine learning community are perhaps Gaussian Processes \citep{rasmussen2006gp}.
Non-parametric models typically do not require any training of parameters, and instead often directly interpolate between training points according to a fixed procedure, e.g.,~\citep[p.17]{rasmussen2006gp}.
The interactions between inputs are fully defined by architectural choices and a small set of hyperparameters that must be carefully chosen.
Conventional non-parametric models cannot \emph{learn} -- in the sense familiar to deep learning practitioners -- interactions from the data, limiting the flexibility these models have in adapting to the data at hand.
Approaches such as Deep Gaussian Processes~\citep{damianou2013deep}, Deep Kernel Learning \citep{wilson2016deep}, and Neural Processes \citep{garnelo2018neural,garnelo2018conditional, kim2019attentive} have all sought to apply ideas from deep neural networks to non-parametrics.
Compared to NPTs, these approaches rely heavily on motivations from stochastic processes.
This leads to them being either less flexible than NPTs or requiring strong assumptions on the data, making them \emph{inapplicable} to the practical scenarios considered in this paper (cf.~\S\ref{sec:related_work}).
Unlike previous work, NPTs explicitly learn to predict from interactions between datapoints, and they can be applied to general supervised machine learning tasks.
We refer to \S\ref{sec:related_work} for an overview of these and other related approaches.

A key contribution of this paper is opening the door to a more general treatment of how deep learning models can make use of dependencies between datapoints for predictions.
Our results demonstrate that NPTs make use of interactions between datapoints in practice, and we show highly competitive performance on several established tabular datasets as well as early image classification results.
Additionally, we show that NPTs can solve complex reasoning tasks by combining representation learning and cross-datapoint lookup; something that is impossible for conventional deep learning or non-parametric models due to their inability to \emph{learn} relations \emph{between} datapoints.

We next discuss the specifics of our model (\S\ref{sec:npt}), before moving on to related work (\S\ref{sec:related_work}), empirical results (\S\ref{sec:experiments}), and finally, limitations, future work, and conclusions (\S\ref{sec:discussion}).
\begin{figure}[t]
    \centering
    \includegraphics{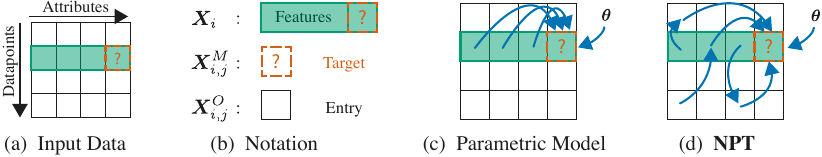}
    \caption{%
    NPTs learn direct interactions between datapoints.
    (a)~Input data: predict masked target entry \textcolor{pal3}{\textbf{[?]}} for datapoint \textcolor{pal2}{$\Xb_i$}.
    (b)~Notation from \S\ref{sec:npt}.
    (c)~Parametric models predict only from the features of the given input.
    (d)~NPTs predict by modeling relationships between all points in the dataset.
    }
    \label{fig:high_level}
\end{figure}

\FloatBarrier

\section{Non-Parametric Transformers} \label{sec:npt}
Non-Parametric Transformers (NPTs) explicitly \emph{learn} relationships between datapoints to improve predictions.
To accomplish this, they rely on three main ingredients:
\textbf{(1)}~We provide the model with the \textbf{entire dataset -- all datapoints -- as input}.
We approximate this with minibatches where necessary for large data.
At test time, both training and test data are input to the model; during training, the model learns to predict targets from the training data (\S\ref{subsection:masking}).
\textbf{(2)}~We use \textbf{self-attention between datapoints} to explicitly model relationships between datapoints.
For example, at test time, the attention mechanism models relationships amongst training points, amongst test points, and between the two.
\textbf{(3)}~NPT's training objective is to reconstruct a corrupted version of the input dataset.
Similar to BERT \citep{devlin2019bert}, we apply \textbf{stochastic masking} to inputs and minimize a loss on predictions at entries masked out in the input.
Next, we introduce the three components in detail.

\subsection{Datasets as Inputs} \label{subsection:dataset_input}
NPTs take as input the entire dataset $\Xb \in \mathbb{R}^{n \times d}$.
The datapoints are stacked as the rows of this matrix $\{\Xb_{i, :}\in \mathbb{R}^d \mid i \in 1 \dots n\}$, and we refer to the columns as attributes $\{\Xb_{:, j}\in \mathbb{R}^n \mid j \in 1 \dots d\}$.
Each attribute is assumed to share a semantic meaning among all datapoints.
In single-target classification and regression, we assume that the targets (labels) are the final attribute $\Xb_{:, d}$, and the other attributes $\{\Xb_{:, j} \mid j \neq d\}$ are input features, e.g., the pixels of an image.
Each $\Xb_{i,j}$ is an entry or value.
In addition to tabular data, many modalities such as images, graphs, or timeseries can be reshaped to fit this format.
Note that this is a departure from common notation for supervised learning as introduced in~\S\ref{sec:intro}, as the input $\Xb$ now includes both features and targets (collectively, attributes).

In masked language modeling \citep{devlin2019bert}, mask tokens denote which words in a sentence are unknown and where, at training time, model predictions will have a loss backpropagated.
Analogously, we use a binary matrix $\calbm{M} \in \mathbb{R}^{n \times d}$ to specify which entries are \emph{masked} in the input $\Xb$.
This matrix is also passed to NPT as input.
The task is to predict the masked values $\Xb^M = \{\Xb_{i, j} \mid \calbm{M}_{i,j} = 1\}$ from the observed values $\Xb^O = \{\Xb_{i, j} \mid \calbm{M}_{i,j} = 0\}$, i.e., to predict $p(\Xb^M \mid \Xb^O)$.

In summary, NPT takes as input the entire dataset and masking matrix $(\Xb, \calbm{M})$, and makes predictions $\smash{\hat{\Xb}\in\mathbb{R}^{n\times d}}$ for values masked at input.
This general setup accommodates many machine learning settings simply by adjusting the placement of the binary masks in $\calbm{M}$.
We focus on single-target classification and regression -- corresponding to a masking matrix $\calbm{M}$ with $1$s at all entries of the label column~$\Xb_{:, d}$ -- but outline multi-target settings, imputation, self-supervision using input features, and semi-supervision in \cref{app:masking_settings}.
Next, we describe the NPT architecture.

\subsection{NPT Architecture} \label{subsection:npt}

\begin{figure}[t]
    \centering
    \includegraphics{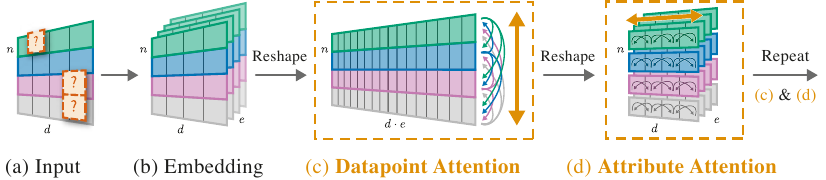}
    \caption{%
    Overview of the Non-Parametric Transformer.
    (a) The input dataset and mask matrix are stacked and (b) linearly embedded for all datapoints independently.
    NPT then applies \textcolor{pal1}{(c)~Attention Between Datapoints} (\textcolor{pal1}{ABD}, \S\ref{subsec:abd}) across all $n$ samples of hidden dimension $h=d \cdot e$.
    \textcolor{pal1}{(d)~Attention Between Attributes} (\textcolor{pal1}{ABA}, \S\ref{subsec:aba}) then attends between the attributes for each datapoint independently.
    We repeat steps~\textcolor{pal1}{(c)} and \textcolor{pal1}{(d)} and obtain a final prediction from a separate linear projection (not shown).
    }
    \label{fig:ttnn_flow}
\end{figure}

An overview of the Non-Parametric Transformer (NPT) is depicted in \cref{fig:ttnn_flow}.
NPT receives the dataset and masking matrix $(\Xb, \calbm{M})$ as input (\cref{fig:ttnn_flow}\textcolor{pal2}{a}).
We stack these and apply an identical linear embedding to each of $n$ datapoints, obtaining an input representation $\Hb^{(0)} \in \mathbb{R}^{n \times d \times e}$ (\cref{fig:ttnn_flow}\textcolor{pal2}{b}).
Next, we apply a sequence of multi-head self-attention layers \citep{NIPS2017_transformer, devlin2019bert, bahdanau2016nmt}.
Crucially, we alternatingly apply attention between \emph{datapoints} and attention between \emph{attributes} of individual datapoints (Figs.~\ref{fig:ttnn_flow}\textcolor{pal2}{c-d}).

These operations allow our model to learn both relationships between datapoints as well as transformations of individual datapoints.
Finally, an output embedding gives the prediction $\smash{\hat{\Xb}\in\mathbb{R}^{n\times d}}$, which now has predicted values at entries that were masked at input.
We refer to \cref{app:embedding} for details, such as treatment of categorical and continuous variables. Importantly:

\vspace{5pt}
\begin{property-non}
NPTs are equivariant to a permutation of the datapoints. (cf.~\cref{app:perm_equivariant} for proof.)
\end{property-non}
In other words, if the set of input datapoints is shuffled, NPTs produce the same prediction but shuffled in an analogous manner.
This explicitly encodes the assumption that the learned relations between datapoints should not depend on their ordering.
At a high level, permutation-equivariance holds because all components of NPTs are permutation-equivariant, and the composition of permutation-equivariant functions is itself permutation-equivariant.
We now briefly recap multi-head self-attention which plays an important role throughout the NPT architecture.

\subsection{Multi-Head Self-Attention}
Multi-head self-attention (MHSA) is a powerful mechanism for learning complex interactions between elements in an input sequence.
Popularized in natural language processing \citep{NIPS2017_transformer, devlin2019bert, bahdanau2016nmt}, MHSA-based models have since been successfully applied to many areas of machine learning (cf.~\S\ref{sec:related_work}).

\emph{Dot-product attention} computes attention weights by comparing queries $\{\Qb_{i} \in \mathbb{R}^{1 \times h_{k}} \mid i \in 1 \dots n\}$ with keys $\{\Kb_{i} \in \mathbb{R}^{1 \times h_{k}} \mid i \in 1 \dots m\}$, ultimately updating the representation of the queries by aggregating over values $\{\Vb_{i} \in \mathbb{R}^{1 \times h_{v}} \mid i \in 1 \dots m\}$ via the attention weights. We stack the queries, keys, and values into matrices $\Qb\in\mathbb{R}^{n\times h_{k}}$, $\Kb\in\mathbb{R}^{m\times h_{k}}$, and $\Vb\in\mathbb{R}^{m\times h_{v}}$ and, as is commonly done for convenience, assume $h_k=h_v=h$. Then, we compute dot-product attention as
\begin{align}
\text{Att}(\Qb, \Kb, \Vb) &= \text{softmax}(\Qb \Kb^T / \sqrt{h}) \Vb.
\label{eq:self_att}
\end{align}
\emph{Multi-head} dot-product attention concatenates a series of $k$ independent \emph{attention heads}
\begin{align}
\text{MHAtt}(\Qb, \Kb, \Vb) &= \underset{\text{axis}=h}{\text{concat}}(\Ob_1, \dots, \Ob_k)\Wb^{O},
\label{eq:multi_head_self_att_perhead}
\text{\ where \ } \\
\Ob_j &= \text{Att}(\Qb \Wb^{Q}_{j}, \Kb \Wb^{K}_{j}, \Vb \Wb^{V}_{j}).
\end{align}
We learn embedding matrices $\smash{\Wb^{Q}_{j}, \Wb^{K}_{j}, \Wb^{V}_{j} \in \mathbb{R}^{h \times h/k}, j \in \{1, \dots, k\}}$ for each head $j$, and $\smash{\Wb^O \in \mathbb{R}^{h \times h}}$ mixes outputs from different heads.
Here, we focus on multi-head \emph{self}-attention, $\text{MHSelfAtt}(\Hb) = \text{MHAtt}(\Qb=\Hb, \Kb=\Hb, \Vb=\Hb)$, which uses the \emph{same} inputs for queries, keys, and values.
Following Transformer best practices to improve performance \cite{NIPS2017_transformer, devlin2019bert, lee2019set, chen2018preln, narang2021modifications}, we first add a residual branch and apply Layer Normalization (LN) \citep{ba2016ln} followed by $\text{MHSelfAtt}(\cdot)$,
\begin{align}
\text{Res}(\Hb) = \Hb \Wb^{\text{res}} + \text{MHSelfAtt}(\text{LN}(\Hb)),
\label{eq:datapoint_att_1}
\end{align}
with learnable weight matrix $\Wb^{\text{res}} \in \mathbb{R}^{h \times h}$.
Then, we add another residual branch with LN and a row-wise feed-forward network (rFF), finally giving the full multi-head self-attention layer as
\begin{align}
\text{MHSA}(\Hb) = \text{Res}(\Hb) + \text{rFF}(\text{LN}(\text{Res}(\Hb)) \in \mathbb{R}^{n\times h}.
\label{eq:MHSA}
\end{align}
\subsection{Attention Between Datapoints (ABD)}
\label{subsec:abd}
The \textbf{Attention Between Datapoints (ABD)} layer is a key operation for NPT.
It explicitly transforms data by reasoning about pairwise relationships between all datapoints, see \cref{fig:ttnn_flow}\textcolor{pal2}{c}.
As input to ABD, we flatten the output of the previous layer $\Hb^{(\ell)}$ from $\mathbb{R}^{n \times d \times e}$ to $\mathbb{R}^{n \times h}$ with $h = d \cdot e$.
Then, we apply $\text{MHSA}(\cdot)$ between the intermediate datapoint representations $\{\Hb^{(\ell)}_{i} \in \mathbb{R}^{1 \times h} \mid i \in 1 \dots n\}$ as
\begin{align}
\text{ABD}(\Hb^{(\ell)}) = \text{MHSA}(\Hb^{(\ell)}) = \Hb^{(\ell + 1)} \in \mathbb{R}^{n \times h}. \label{eq:abd_out}
\end{align}
At the first ABD layer, we input $\Hb^{(0)}\in \mathbb{R}^{n \times d \times e}$, the linearly embedded input data.
After applying ABD, we reshape the output again, from $\mathbb{R}^{n \times h}$ to $\mathbb{R}^{n \times d \times e}$.
Here, the $\text{rFF}$ of each ABD layer is an MLP that is applied independently to each of the $n$ datapoints.

Note that this is distinct from how $\text{MHSA}(\cdot)$ is usually applied in the literature, as we compute attention between \emph{different datapoints} and not between the \emph{features of a single datapoint} \citep{NIPS2017_transformer, devlin2019bert, dosovitskiy2020image, jaegle2021perceiver}.
For example, in natural language processing, attention is usually applied between the tokens (attributes) of a sentence (datapoint) but not between different sentences.
For example, NPT could learn to attend between two datapoints with indices $i$ and $i'$ by embedding $\Qb_i$ and $\Kb_{i'}$ in close proximity.
Following \eqref{eq:self_att}, datapoint $i$ will then attend more closely to $i'$ because $\Qb_i \Kb_{i'}^{T}$ will be large.
By stacking many ABD layers, NPT can learn higher-order interactions between datapoints \citep{NIPS2017_transformer, devlin2019bert}.

\subsection{Attention Between Attributes (ABA)}
\label{subsec:aba}

We now introduce \textbf{Attention Between Attributes (ABA)}, which we by default perform after each ABD layer.
ABA layers can help the model learn better per-datapoint representations for the between-datapoint interactions, see \cref{fig:ttnn_flow}\textcolor{pal2}{d}.
For ABA, we apply $\text{MHSA}(\cdot)$ independently to each row (corresponding to a single datapoint) in the input $\Hb_i^{(\ell)} \in \mathbb{R}^{d\times e}$, $i\in \{1, \dots, n\}$, giving
{
\setlength{\abovedisplayskip}{4pt}
\setlength{\belowdisplayskip}{5pt}
\begin{align}
    \text{ABA}(\Hb^{(\ell)}) = \underset{\text{axis}=n}{\text{stack}}(\text{MHSA}(\Hb^{(\ell)}_{1}), \dots ,\text{MHSA}(\Hb^{(\ell)}_{n})) = \Hb^{(\ell + 1)} \in \mathbb{R}^{n \times d \times e}.
\label{eq:attribute_att}
\end{align}
}%
Just like in standard Transformers \citep{NIPS2017_transformer, devlin2019bert, dosovitskiy2020image, jaegle2021perceiver}, ABA is used to transform attribute representations of single datapoints independently.
We batch over the $n$ dimension to compute ABA efficiently.
By alternating between attention between datapoints (ABD) and attributes (ABA), NPTs can model both complex dependencies between points as well as learn suitable transformations of datapoints individually.
Next, we describe the use of masking mechanisms during NPT training and evaluation.
\subsection{Masking and Optimization} \label{subsection:masking}

\textbf{Masking.} Much like in masked language modeling \citep{devlin2019bert}, we use masks to indicate which values NPT is expected to predict, and to prevent the model from accessing ground truth values.
Recall that NPT needs to predict $p(\Xb^M \mid \Xb^O)$, with masked values $\Xb^M = \{\Xb_{i, j} \mid \calbm{M}_{i,j} = 1\}$ and observed values $\Xb^O = \{\Xb_{i, j} \mid \calbm{M}_{i,j} = 0\}$.
Masked values can be either features or targets.
Canonically, masked language modeling is used to perform self-supervised learning on a sequence of tokens in a sentence~\citep{devlin2019bert}.
We use such \emph{stochastic feature masking} to mask feature values $\Xb_{i,j}, j \neq d$, with probability $p_{\text{feature}}$ during training.
We also apply stochastic masking to the targets of the training set $\Xb_{:,d}$ with probability $p_{\text{target}}$.
We call this \emph{stochastic target masking}.
Note that we take great care to avoid test set leakage and \emph{never} reveal targets of the test set to NPT.
We refer to \cref{app:masking_settings} for full details of our masking procedure in a variety of settings.

\textbf{NPT Objective.} During training, we compute the negative log-likelihood loss at training targets $\mathcal{L}^{\textup{Targets}}$ as well as the auxiliary loss from masked-out features $\mathcal{L}^{\textup{Features}}$.
We write the NPT training objective as $\mathcal{L}^{\textup{NPT}} = (1 - \lambda) \mathcal{L}^{\textup{Targets}} + \lambda \mathcal{L}^{\textup{Features}}$, where $\lambda$ is a hyperparameter.
At test time, we only mask and compute a loss over the targets of test points. 
See \cref{app:optimization} for optimization details.

This objective has a few notable elements.
Feature masking requires NPTs to make predictions over all attributes, encouraging the models to learn a representation of the entire dataset.
This increases the difficulty of the task and adds more supervision, which we find tends to have a beneficial regularizing effect.
Interestingly, stochastic \emph{target} masking means that many training targets are \emph{unmasked} to the model at training time.
This allows NPTs to learn to predict the masked targets of certain training datapoints using the \emph{targets of other training datapoints} in addition to all input features.\footnote{A concern here could be that the model will memorize training targets and fail to generalize. In practice, we do not observe generalization issues, likely because (i) a loss is never backpropagated on an unmasked value, and (ii) BERT-style masking \citep{devlin2019bert} uses token randomization to prevent memorization. 
See \cref{app:masking_settings}.}
NPTs no longer have to memorize a mapping between training inputs and outputs in their parameters~$\btheta$, and can instead use their representational capacity to learn functions using other \emph{training features and targets as input}.
For example, NPTs could learn to assign test datapoints to clusters of training datapoints, and predict on those points using interpolation of the training targets in their respective cluster.
We explore the ability of NPTs to solve such tasks in \S\ref{subsection:npt_synthetic}.
Further, we study more complex extensions to these tasks, which cannot be solved by simple interpolative models, in \cref{appendix:protein_duplicate_nn_fail}.

\looseness=-1
\textbf{Handling Large Datasets.}
Due to the poor $\mathcal{O}(n^2)$ time and space complexity of self-attention, we resort to approximations once the data grows too large.
For example, we reach \num{24}~GB of GPU memory for standard NPT model sizes at about \si{8000} datapoints.
We find that processing the data in random subsets for model training and prediction, i.e., \emph{minibatching}, is a simple and effective solution.
We construct minibatches such that, at test time, training and test data are both present in the same batch, to allow NPTs to attend to training datapoints.
In \S\ref{subsection:sample_corr}, we show that NPTs make use of attention between datapoints with minibatching enabled.
See \S\ref{sec:discussion} for further discussion and ideas for future work.

\section{Related Work}
\label{sec:related_work}
\textbf{Deep Non-Parametric Models.}
Deep Gaussian Processes \citep{damianou2013deep} and Deep Kernel Learning (DKL) \citep{wilson2016deep} extend ideas from Gaussian Processes \citep{rasmussen2006gp} to representation learning.
Deep GPs stack standard GPs with the aim to learn more expressive relationships between input points, sharing motivation with NPTs.
However, unlike NPTs, deep GPs are difficult to work with in practice, requiring complex approximate inference schemes \citep{dai2015variational,bui2016deep,NIPS2017_82089746}.
DKL applies a neural network to each datapoint \emph{independently} before passing points on to a standard Gaussian Process, making predictions based directly on similarity in embedding space instead of \emph{learning} the interactions themselves.

\textbf{Neural Processes.}
Similar to GPs, Neural Processes (NPs) \citep{garnelo2018neural,garnelo2018conditional} define a distribution over functions.
They use a latent variable model parametrized by neural networks, fulfilling specific architectural constraints to approximately preserve consistency of finite-dimensional marginals.
Attentive Neural Processes (ANPs) \citep{kim2019attentive} extend Neural Processes to allow for direct attention between a context set and targets.
However, as the authors themselves stress, ``NPs and GPs have different training regimes'' \citep{kim2019attentive}.
While a GP can be trained on a single dataset, \emph{(A)NPs require multiple realizations of the dataset}.
The authors further note that \emph{``a direct comparison between the two is usually not plausible''} \citep{kim2019attentive}, which is why we cannot compare (A)NPs to NPTs on our standard tasks.

\textbf{Attention.}
NPTs are part of a line of recent work that explores the use of Transformer-based architectures outside of natural language processing, e.g., Transformers in computer vision \cite{parmar2018image,dosovitskiy2020image,jaegle2021perceiver} or architectures exploiting desirable invariances or equivariances \cite{lee2019set,locatello2020object,fuchs2020se3transformers,hutchinson2021lietransformer}.
Like NPTs, Set Transformer \citep{lee2019set} attends to a set of input points.
However, unlike NPTs, Set Transformer relies on the existence of multiple independent sets for training and makes only a single prediction for each set.
Like NPTs, Axial Transformers \citep{ho2019axial} and MSA Transformers \citep{rao2021msa} attend to multiple dimensions of matrix-shaped input.
However, Axial Transformers process single images as input, i.e., no attention across datapoints is performed.
MSA Transformers use attention within individual protein sequences and across an aligned protein family for contact prediction, but do not consider a more general setting.
Recent works have improved neural network performance on tabular data using attention.
AutoInt~\citep{song2019autoint} is a direct application of multi-head attention to tabular data, and TabNet \citep{arik2020tabnet} sequentially attends to sparse subsets of the features inspired by tree-based models.
Both approaches do not reason about interactions between datapoints, a key contribution that we introduce with NPT in this work.

\textbf{Few-Shot Learning, Meta-Learning, and Prompting.} 
In \S\ref{subsection:npt_synthetic}, we apply NPTs to tasks that require learning of relational structure between datapoints on training data to achieve good generalization performance on novel test inputs.
This setup shares motivations with meta-learning \citep{biggs1985metalearning, bengio1990metalearning, lake2015metalearning, pmlr-v70-finn17a}, in which a model is pre-trained on a variety of tasks, such that it can then learn new tasks using only a small number of additional training points from the new task.
However, we consider evaluation without any additional gradient updates, unlike recent meta-learning methods \citep{pmlr-v70-finn17a, kim2018metalearning} which are therefore inapplicable to this setting.
Recent works on few-shot learning with text prompting \citep{radford2019language, brown2020language} provide a trained Transformer-based language model with a few examples of a novel relationship in a prompt at prediction time, where they observe strong generalization on the task.
Similarly, we consider attention between a ``context'' of datapoints.
While ground-truth input-output pairs are provided for prompting, we consider settings in which no ground-truth is given at prediction time (cf. \cref{appendix:protein_duplicate_nn_fail}), but the model can solve the task if it has learned the underlying relational structure.

\textbf{Semi-Supervised Learning and Graph Neural Networks.}
NPTs relate to work on semi-supervised learning \citep{chapelle2009semi,erhan2010does,kingma2014semi}  and transductive learning \citep{vapnik2006estimation}, which both make use of unlabeled inputs during training.
NPTs natively support this by simply including any unlabeled datapoints with masked-out targets in the input matrix at training time.
This body of related work includes semi-supervised and transductive learning on graphs using graph neural networks~(GNNs), e.g.,~\citep{gcn2016, garcia2017few,velivckovic2017graph,kipf2018neural,xu2019powerful}.
NPTs can be seen as a generalization of GNNs in which a set of dependencies (edges) between datapoints is not known a priori and is instead learned from data using self-attention. 
Like NPTs, Neural Relational Inference~(NRI)~\cite{kipf2018neural} attempts to discover relations amongst datapoints.
However, NRI lacks scalability because it requires that embeddings be stored for each potential graph edge.

\textbf{Metric Learning}. (Deep) Metric Learning aims to learn distance functions such that the (semantic) similarity and dissimilarity between input points is meaningfully captured, e.g.,~\citep{vijayakumar1997local,roweis2004neighbourhood,vinyals2016matching,movshovitz2017no,wang2019ranked,seidenschwarz2021learning}.
Similarly, retrieval models in NLP learn to look up relevant training instances for prediction~\citep{guu2018generating,hashimoto2018retrieve,guu2020realm}.
The attention between datapoints in NPTs can be seen as implicitly learning exactly such (dis-)similarity.
Usually, metric learning embeds inputs by applying the same embedding function independently to each datapoint.
This is in contrast to NPTs, which leverage a learned self-attention mechanism between test inputs and training datapoints (including their labels) at prediction time.

\section{Experiments}
\label{sec:experiments}
We seek to answer the following set of questions in our evaluation\footnote{
We release code for NPTs at \href{https://github.com/OATML/Non-Parametric-Transformers}{github.com/OATML/Non-Parametric-Transformers}.
} of NPTs:
(\textbf{Q1}) How do NPTs perform on standard benchmarks for supervised machine learning?
(\textbf{Q2}) Can NPTs successfully model interactions between datapoints in idealized settings?
(\textbf{Q3}) Do NPTs actually learn to rely on interactions between datapoints for prediction on real-world datasets?
(\textbf{Q4}) If so, what is the nature of these interactions, e.g., which other datapoints are relevant for prediction?

\subsection{NPTs Perform Competitively on Established Benchmarks}
\label{sec:uci_benchmarks}
\begin{table}[t]
    \centering
    \caption{
    Average rank order of various methods $(\pm\ \text{standard error})$ on UCI benchmarks, across binary classification, multi-class classification, and regression tasks. We determine rank using the test area under the receiver operating characteristic (AUROC) curve on binary classification (4 of 10 datasets), accuracy on multi-class classification (2 of 10), and root mean squared error (RMSE) on regression (4 of 10), and sort methods by ascending rank for each metric. See \cref{sec:extended_results} for the full results.
    }
    \vspace{.5em}
    \sisetup{detect-weight,mode=text,group-minimum-digits = 4} %
    \label{table:uci_rank_order}
\resizebox{1\textwidth}{!}{%
\begin{tabular}{
@{}>{
\columncolor{white!70}[0pt][\tabcolsep]}Nl
>{\columncolor{white!70}[\tabcolsep][0pt]}Or @{}}
\toprule
\textit{Method} & \text{AUROC} \\
\midrule
\rowcolor{Gray}
\noindent\parbox[c]{1cm}{NPT} & \noindent\parbox[c]{1.73cm}{\bfseries\num{2.50 +- 0.87}}
\\
\text{CatBoost} & \num{2.75 +- 0.85} \\
\text{LightGBM} & \num{3.50 +- 1.55} \\
\text{XGBoost} & \num{4.75 +- 1.25} \\
\text{Gradient Boosting} & \num{5.00 +- 0.71} \\
\text{MLP} & \num{5.75 +- 1.49} \\
\text{Random Forest} & \num{6.00 +- 0.71} \\
\text{TabNet} & \num{6.50 +- 1.32} \\
\text{k-NN} & \num{8.25 +- 0.48} \\
\bottomrule
\end{tabular}
\hspace{5pt}
\setlength{\tabcolsep}{3pt}
\begin{tabular}{
@{}>{
\columncolor{white!70}[0pt][\tabcolsep]}Nl
>{\columncolor{white!70}[\tabcolsep][0pt]}Or @{}}
\toprule
\textit{Method} & \text{Accuracy} \\
\midrule
\rowcolor{Gray}
\noindent\parbox[c]{1cm}{NPT} & \noindent\parbox[c]{1.73cm}{\bfseries\num{2.50 +- 0.50}}\\
\text{XGBoost} & \bfseries \num{2.50 +- 1.50} \\
\text{MLP} & \num{3.00 +- 2.00} \\
\text{CatBoost} & \num{3.50 +- 0.50} \\
\text{Gradient Boosting} & \num{3.50 +- 1.50} \\
\text{Random Forest} & \num{6.50 +- 0.50} \\
\text{TabNet} & \num{7.50 +- 0.50} \\
\text{LightGBM} & \num{7.50 +- 1.50} \\
\text{k-NN} & \num{8.50 +- 0.50} \\
\bottomrule
\end{tabular}
\hspace{5pt}
\setlength{\tabcolsep}{3pt}
\begin{tabular}{
@{}>{
\columncolor{white!70}[0pt][\tabcolsep]}Nl
>{\columncolor{white!70}[\tabcolsep][0pt]}Or @{}}
\toprule
\textit{Method} & \text{RMSE} \\
\midrule
\text{CatBoost} & \bfseries \num{3.00 +- 0.91}\\
\text{XGBoost} & \num{3.25 +- 0.63} \\
\rowcolor{Gray}
\noindent\parbox[c]{1cm}{NPT} & \noindent\parbox[c]{1.65cm}{\num{3.25 +- 1.31}}\\
\text{Gradient Boosting} & \num{4.00 +- 1.08} \\
\text{Random Forest} & \num{4.50 +- 0.87} \\
\text{MLP} & \num{5.00 +- 1.22} \\
\text{LightGBM} & \num{6.50 +- 1.55} \\
\text{TabNet} & \num{6.75 +- 0.95} \\
\text{k-NN} & \num{8.75 +- 0.25} \\
\bottomrule
\end{tabular}
}
\end{table}
To answer \textbf{(Q1)}, we evaluate NPTs on tabular data from the UCI Repository \citep{Dua:2019} as well as the CIFAR-10 \citep{krizhevsky2009learning} and MNIST \citep{lecun2010mnist} image classification datasets.
Tabular data is ubiquitous in real-world machine learning \cite{chui2018} but notoriously challenging for general purpose deep neural networks, which are rarely used in practice here because they are consistently outperformed by boosting models \citep{schapire1990strength}.%
\footnote{%
We conduct an informal survey of all Kaggle \citep{kaggle2021} competitions using tabular data completed in \num{2020} with a public leaderboard.
In \num{11} out of a total of \num{13} cases, the winning entries relied on some form of boosting.
}

\textbf{Tabular Datasets, Setup, and Baselines.}
We evaluate NPTs over 10 datasets varying across the number of datapoints, number of features, composition (categorical or continuous) of features, and task.
4 of the 10 are binary classification, 2 are multi-class classification, and 4 are regression.
We compare NPT against a wide set of standard or state-of-the-art baselines: Random Forests \citep{breiman2001random}, Gradient Boosting Trees \citep{friedman2001gradboosting}, XGBoost \citep{xgboost2016}, CatBoost \citep{catboost2017}, LightGBM \citep{ke2017lightgbm}, MLPs, k-NN \citep{fix1985discriminatory,altman1992introduction}, and TabNet \citep{arik2020tabnet}.
For additional background on tree-based models, see \cref{app:tree_baseline_background}.
We tune the parameters of all models on validation sets and use 10-fold cross-validation whenever computationally feasible.
Note that while we perform an extensive grid search for the baselines, we only search over a small set of configurations for NPTs.
We refer the reader to \cref{app:uci_benchmarks} for further details on the setup for datasets and baselines, and \cref{app:npt_training} for NPT hyperparameters.

\textbf{Tabular Data Results.}
We report the average rank order for NPT and various tree-based and deep learning baselines in \cref{table:uci_rank_order}. 
NPT achieves the highest average ranking on binary and multi-class classification tasks, outperforming CatBoost and XGBoost, two popular state-of-the-art boosting methods designed specifically for tabular data.
On regression tasks, NPT ties in average rank with XGBoost, and is outperformed only by CatBoost.
In addition to its strong rank-wise performance, NPT achieves best performance on 4 of the 10 benchmark datasets -- more than any other method.
We find that these are remarkable results for a general purpose model that does not include tabular-specific design, supporting our hypothesis that attention between datapoints is a useful architectural inductive bias for prediction.
For all metrics across all datasets, i.e., NLL for classification, AUROC/accuracy for binary/multi-class classification, and (R)MSE for regression, we refer the reader to \cref{sec:extended_results}.
In the appendix, we present ablations which suggest that the performance of NPT is robust across a wide range of hyperparameter choices (\cref{appendix:ablation_study}) and that both the introduction of the ABA layer and the stochastic feature masking contribute positively to the performance of NPTs (\cref{sec:new_ablation_aba_feature_masking}).

\textbf{Image Data Results.} On CIFAR-10, we replace our linear encoder with a CNN followed by ABD layers on the CNN encodings, achieving a test accuracy of $\num{93.7}\%$. 
We achieve $\num{98.3}\%$ accuracy on MNIST using linear patching \citep{dosovitskiy2020image}.
Crucially, we show in \S\ref{subsection:sample_corr} that NPTs learn to make use of interactions between images on both the CIFAR-10 and MNIST datasets, supporting the claim that attention between datapoints is useful beyond tabular data.
We also explore linear patching on CIFAR-10. See 
\Cref{app:image_classification} for these results along with setup details and further discussion.

\subsection{NPTs Can Learn to Predict Using Attention Between Datapoints}
\label{subsection:npt_synthetic}
\begin{figure}[t]
    \centering
    \includegraphics{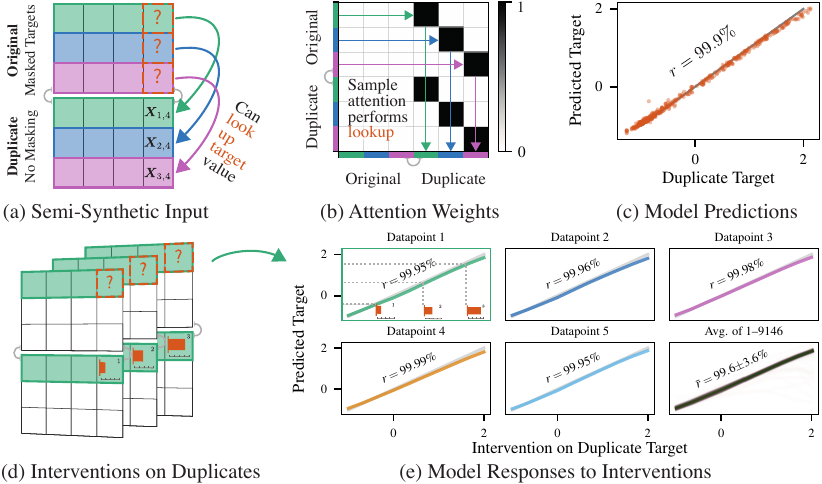}
    \caption{
    Demonstrating NPT's ability to predict from Attention Between Datapoints (ABD).
    (a)~We append to the original data with masked targets \textcolor{pal3}{{\textbf{[?]}}} a copy of the same data with all masked values revealed, such that perfect prediction via lookup is possible.
    (b)~Attention weights indicate that the ideal lookup behavior is learned by NPT.
    Shown are actual values learned by NPT at head \num{0} and depth \num{4} for the first \num{3} datapoints.
    (c)~NPT predictions closely match the ideal values.
    (d)~Additionally, we intervene on the values of individual targets, (e)~finding that NPT predictions adjust accordingly.
}
\label{fig:protein_duplicate}
\end{figure}
To determine if NPTs can successfully learn to exploit interactions between datapoints \textbf{(Q2)}, we introduce a task with strong input correlations for which we know ground-truth interactions.
Concretely, we use the UCI Protein regression dataset (cf. \S\ref{sec:uci_benchmarks}) to construct the following semi-synthetic task:
for each batch, we input the original data with masked target values as well as a \emph{copy} of the original data where all target values have been revealed, i.e., no masking is applied (\cref{fig:protein_duplicate}\textcolor{pal2}{a}).
NPTs can use attention between datapoints to achieve arbitrarily good performance by \emph{learning} to look up the target values in the matching duplicate row.
At test time, we input novel semi-synthetic test data to ensure that NPT has learned the correct relational mechanism and not just memorized target values.

NPTs successfully learn to perform this lookup between original and duplicate datapoints.
The ABD attention weights, visualized for the first three datapoints in \cref{fig:protein_duplicate}\textcolor{pal2}{b}, clearly show the model correctly attending to the duplicates.
As a result, NPT predictions are Pearson-correlated with the duplicate targets at $r=99.9\%$ (\cref{fig:protein_duplicate}\textcolor{pal2}{c}).
This equals an RMSE of only \num{0.44}, about a magnitude lower than the error on the original Protein dataset (\cref{table:uci_reg_rmse}).
We conclude that NPTs learn to predict by looking up the target values from matching points.
Further discussion and attention maps are in \cref{appendix:protein_duplicate_add}.

Purely parametric models cannot exploit information from other datapoints, limiting their performance.
For example, MLPs achieve an RMSE of \num{3.62} on this task.
Non-parametric approaches also cannot solve this task in its original form, because unlike NPTs they must be told which datapoints are the originals (training data) and which the duplicates (test data) as well as which columns contain features and which target values.
We demonstrate in \cref{appendix:protein_duplicate_nn_fail} that even when we make these concessions, we can easily adapt the task such that both k-Nearest Neighbors and Deep Kernel Learning fail to solve it.
In fact, we are not aware of any other model that can solve the adapted task.

Additionally, we perform an \emph{interventional} experiment to investigate the extent to which NPTs have actually learned the causal mechanism underlying the lookup task.
As illustrated in \cref{fig:protein_duplicate}\textcolor{pal2}{d}, we now intervene on individual duplicate datapoints at test time by varying their target value across a wide range.
We stress that we perform these experiments without retraining the model, using exactly the same NPT from Figs.~\ref{fig:protein_duplicate}\textcolor{pal2}{a-c}.
The model is now confronted with target values associated with features that are highly unlikely under the training data.
This label distribution shift \citep{garg2020labelshift} is a challenging setting for neural networks.
However, NPT predictions follow the intervened target values with near-perfect correlation, \cref{fig:protein_duplicate}\textcolor{pal2}{e}, continuing to predict by correctly looking up targets. 

We now confidently conclude that NPTs robustly learn the causal data-generating mechanism underlying the semi-synthetic dataset.
This requires NPTs to \emph{learn} a non-trivial sequence of compuational steps.
They must learn to match rows based on similarity of relevant features; to look up the target value of the duplicated datapoint; and, to copy that value into the target of the masked datapoint.

\subsection{NPTs Learn to Use Attention Between Datapoints on Real Data} \label{subsection:sample_corr}

\begin{table}[t]
    \centering
    \caption{
    Drop in NPT performance after destroying information from other datapoints. 
    Shown are changes in test set performance, where negative values indicate worse performance after corruption.
    }
    \vspace{.5em}
    \sisetup{detect-weight,mode=text,group-minimum-digits = 4} %
    \label{table:uci_class_row_corr}
\resizebox{1\textwidth}{!}{
\begin{tabular}{
@{}l
rrrrrrrr@{}}
\toprule
$\Delta$ \textit{Accuracy} & \text{CIFAR-10} & \text{Poker} &  \text{Income} & \text{Higgs} & \text{MNIST} & \text{Forest} & \text{Kick} & \text{Breast Cancer} \\
\midrule
  & \num{-1.2} & \num{-1.1} & \num{-1.1} & \num{-0.5} & \num{-0.4} & \num{-0.1} & \num{-0.1} & \num{0.0} \\
\midrule %
$\nicefrac{\Delta \text{\textit{RMSE}}}{\text{\textit{RMSE}}}\ (\%)$ & \text{Yacht} & \text{Protein} & \text{Boston} & \text{Concrete} && \\
\midrule
         & \num{-52}\%  & \num{-21}\%  & \num{-20}\% & \num{-7}\%  &  & \\
\bottomrule
\end{tabular}
}
\end{table}

We next consider \textbf{(Q3)}:
do NPTs actually learn to use attention between datapoints for prediction on real data?
We design a test that allows us to quantify the extent to which the predictions of an NPT trained in standard fashion on one of our benchmark datasets depend on relationships between datapoints at test time.
Concretely, for each target value in the input we randomize the data for all \emph{other} datapoints by independently shuffling each of their attributes across the rows.
We then evaluate the loss on the prediction at the target entry and repeat this procedure for all test datapoints.
This completely corrupts the information from all datapoints except the one for which we evaluate.
Hence, a model that relies meaningfully on attention between datapoints will show deteriorating performance.
We give an algorithm for the corruption procedure as well as further discussion in \cref{appendix:subsection:sample_corr}.

We report the resulting change in performance after corruption in \cref{table:uci_class_row_corr} for all datasets from \S\ref{sec:uci_benchmarks}.
We find that for most datasets, the corruption of other rows at test time significantly decreases the performance of the trained NPT models.
This indicates that the NPTs have successfully learned to make predictions supported by attention between datapoints.
For some datasets, the corruption experiment deteriorates performance completely.
For example, for the Protein regression dataset NPT achieves state-of-the-art performance, but corrupting the input at test time leads to NPT performing worse than all of the baselines considered in \S\ref{sec:uci_benchmarks}.
We note that minor differences in performance are often still significant, as differences between competing models in \S\ref{sec:uci_benchmarks} are often likewise small.

Interestingly, on certain datasets such as Forest Cover, Kick, and Breast Cancer, corrupted inputs do not significantly affect performance.
It appears that when NPTs do not find it advantageous to rely on attention between datapoints during training, they can learn to completely ignore other inputs, essentially collapsing into a standard parametric model.
This supports our earlier claims that NPTs can learn end-to-end from data the extent to which they rely on other datapoints for prediction.
We think this is extremely interesting behavior and are unaware of prior work reporting similar results.
However, we stress that these results reflect inductive biases of the NPT architecture and do not lend themselves to general statements about the performance of parametric versus non-parametric models.
\FloatBarrier
\subsection{NPTs Rely on Similar Datapoints for Predictions on Real Data}
\begin{wrapfigure}{R}{0.27\textwidth}
  \centering
  \includegraphics{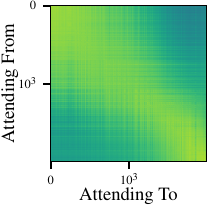}
  \captionsetup{labelformat=empty}
    \caption{Fig. 4: Attention weights.}
  \label{fig:protein_real_data}
\end{wrapfigure}
\label{sec:npt_relies_on_similar}
So far, we have presented convincing evidence that NPTs (sometimes strongly) depend on attention between datapoints.
However, we do not know what kind of interactions are learned in practice on real data~(\textbf{Q4}).
As an initial step towards understanding this, we now present two experiments investigating \emph{to which} other datapoints NPT attends.%

\textbf{Qualitative Evidence.} 
\Cref{fig:protein_real_data} shows an attention map for attention between datapoints (ABD) of NPT on a batch of the Protein regression dataset.
We sort the input data with respect to their input space distance such that similar datapoints are now close to each other.
The diagonal pattern in \cref{fig:protein_real_data} indicates that NPT attends more strongly to datapoints that are similar in feature space.
\Cref{appendix:attention_maps_real_data} discusses this further and gives additional attention maps.

\textbf{Quantitative Evidence.}
Seeking a quantitative measure for this hypothesis, the \emph{data deletion} experiment repeats the following procedure for all test set points:
iteratively delete other datapoints from the input if they do not significantly affect the prediction.
We stop if less than $\num{2}\%$ of the original datapoints remain, or if the total change in prediction for the target (relative to the original prediction with all data) exceeds $\num{10}\%$. 
We investigate the average input feature space distances between the test point and the \emph{kept} datapoints, as well as the distances between the test point and the \emph{deleted} datapoints.
``Input features'' here refer to all attributes of the input datapoints that are not labels.

We find that kept datapoints have a significantly lower average feature space distance to the test point than those deleted.
This indicates that two datapoints $i, i'$ that are similar in input feature space, such that $\sum_{j<d} (X_{i, j} - X_{i^\prime, j})^2$ is low, have a larger effect on the predictions of one another.
A Wilcoxon signed-rank test is significant at $\smash{p\approx 8.77\cdot 10^{-130}}$.
We give full details on this in \cref{appendix:sample_deletion_experiments}.

Both experiments support the hypothesis that NPTs rely on similar datapoints for prediction in real data settings.
One possible explanation is that similar datapoints might have different realizations of observation noise which NPTs could learn to average out.
Altogether, we conclude that NPTs can and do learn representations which rely on interactions between datapoints for prediction.

\section{Limitations, Future Work, and Conclusions} \label{sec:discussion}

\paragraph{Limitations.} NPTs share scaling limitations with all naïvely non-parametric approaches \citep{rasmussen2006gp} and GNNs \citep{gcn2016}.
We demonstrate this in a preliminary analysis of the computational cost of NPTs and the baseline methods -- including training time and CPU/GPU memory requirements -- in \Cref{app:computational_cost}.
While we have seen success with random minibatching (\S\ref{subsection:masking}), future work might consider applying principled attention approximations, such as learning representative input points \citep{lee2019set}, kernelization \citep{katharopoulos2020lineartransformer, choromanski2020performer}, or other sparsity-inducing methods \citep{tay2020efftransformers,child2019sparse,beltagy2020longformer}, to improve the scalability of NPTs.

\paragraph{Future Work.} 
We believe that the unique predictive mechanism of NPTs makes them an interesting object of study for other tasks including continual learning, multi-task learning, few-shot generalization, and domain adaptation.
For example, when predicting under distribution shift, general relations between datapoints and attributes may remain valid and allow NPTs to accommodate such scenarios better.
Additionally, future work could explore the connections to stochastic processes, e.g., by extending NPTs to be approximately consistent, similar to Neural Processes \citep{garnelo2018neural,garnelo2018conditional,kim2019attentive}.

\paragraph{Conclusions.} We have introduced Non-Parametric Transformers (NPTs), a novel deep learning architecture that takes the entire dataset as input and uses self-attention to model complex relationships \emph{between} datapoints.
NPTs challenge and naturally extend parametric modeling as the dominant paradigm of deep learning.
They have the additional flexibility to learn to predict by directly attending to other datapoints.
Notably, NPTs learn this end-to-end from the data at hand.
Empirically, NPTs achieve highly competitive performance on a variety of benchmarks, and additional experiments demonstrate their ability to solve complex reasoning tasks over datapoints.
Further, we show that on real data, NPTs learn to rely on attention between datapoints for prediction.
We believe that the characteristics of NPTs will make them an exciting object of further study.

\begin{ack}
We acknowledge funding from the New College Yeotown Scholarship (JK), the Rhodes Trust (NB), and the Open Philanthropy AI Fellowship (CL).
We thank Lewis Smith, Pascal Notin, Uri Shalit, Joost van Amersfoort, Sören Mindermann, Lood van Niekerk, and the anonymous reviewers for helpful feedback and interesting discussions that have led to numerous improvements of the paper.
\end{ack}

\FloatBarrier
\newpage
\bibliographystyle{plainnat}
{\small
\bibliography{bibliography}
}

\newpage
\appendix
\counterwithin{figure}{section}
\begin{center}
{\bfseries \LARGE Self-Attention Between Datapoints: Going Beyond \\[.5em] Individual Input-Output Pairs in Deep Learning
}
\end{center}

\part{Appendix} %
\parttoc %

\clearpage 

\section{Proof -- NPT Is Equivariant over Datapoints} \label{app:perm_equivariant}
We here provide proof that NPT is equivariant to a permutation of the datapoints.
This requires, among other things, showing that multi-head self-attention is equivariant.
We were unable to find this proof in the existing literature, e.g., Set Transformer \citep{lee2019set} relies heavily on equivariance of self-attention but does not provide proof.
In the following, we will refer to datapoints as the \emph{rows} of our input, see e.g., \cref{fig:high_level}.

\begin{definition}
A function $f:\mathcal{X}^n \rightarrow \mathcal{X}^n$ is \textit{row-equivariant} if for any permutation $\sigma: [1, \dots, n] \rightarrow [1, \dots, n]$ applied to the dimensions of $\mathcal{X}^n$, we have for all $i$, $f(X_1, \dots, X_n)[i] = f(X_{\sigma^{-1}(1)}, \dots, X_{\sigma^{-1}(n)})[\sigma(i)]$.
\end{definition}
\begin{lemma}
Any function of the form $f(X_1, \dots, X_n) = (g(X_1), \dots, g(X_n))$ for some $g$ is row-equivariant. These functions are denoted as `row-wise operations', as they consist of the same function applied to each of the rows of the input.
\end{lemma}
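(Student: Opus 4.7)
The plan is to unpack both sides of the row-equivariance identity using the explicit form $f(X_1,\dots,X_n) = (g(X_1),\dots,g(X_n))$ and show they reduce to the same expression $g(X_i)$. Since the statement is essentially a definitional unfolding, the entire argument is a short calculation rather than a structural proof; the main thing to get right is careful bookkeeping of the permutation indices and its inverse.

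Concretely, I would first write out the left-hand side: by the given form of $f$, the $i$-th component of $f(X_1,\dots,X_n)$ is just $g(X_i)$. Then for the right-hand side, I would set $Y_j := X_{\sigma^{-1}(j)}$ so that the permuted input reads $(Y_1,\dots,Y_n)$, apply the same row-wise formula to obtain $f(Y_1,\dots,Y_n)[\sigma(i)] = g(Y_{\sigma(i)})$, and then substitute back $Y_{\sigma(i)} = X_{\sigma^{-1}(\sigma(i))} = X_i$ using that $\sigma$ is a bijection. Both sides equal $g(X_i)$, which closes the proof.

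The only potential pitfall is notational: one must be careful that the convention $Y_j = X_{\sigma^{-1}(j)}$ (rather than $Y_j = X_{\sigma(j)}$) matches the convention implicit in the definition of row-equivariance given in the paper, so that evaluating the row-wise map at position $\sigma(i)$ reads off $X_i$ from the permuted input. I would state this convention explicitly before the calculation to avoid ambiguity. No deeper obstacle is expected; unlike the subsequent (and, per the paper's remark, previously unwritten) proof that multi-head self-attention is row-equivariant, this lemma is purely a consequence of the row-wise structure of $f$ and requires no properties of $g$ whatsoever.
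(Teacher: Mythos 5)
Your proof is correct and matches the paper's approach: the paper simply states that the lemma ``follows immediately from the structure of $f$,'' and your calculation with $Y_j := X_{\sigma^{-1}(j)}$, giving $f(Y_1,\dots,Y_n)[\sigma(i)] = g(Y_{\sigma(i)}) = g(X_{\sigma^{-1}(\sigma(i))}) = g(X_i)$, is exactly the unfolding the paper leaves implicit. Your attention to the index convention ($Y_j = X_{\sigma^{-1}(j)}$ rather than $X_{\sigma(j)}$) is the right detail to be careful about, and nothing further is needed.
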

\begin{proof}
Follows immediately from the structure of $f$.
\end{proof}
\begin{lemma}
The composition of row-equivariant functions is row-equivariant.
\end{lemma}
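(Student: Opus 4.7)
The plan is to unfold the definition of row-equivariance twice: once for the inner function and once for the outer function. Suppose $f$ and $g$ are both row-equivariant, and let $\sigma$ be an arbitrary permutation of $[1, \dots, n]$. I want to show that the composition $g \circ f$ satisfies $(g \circ f)(X_1, \dots, X_n)[i] = (g \circ f)(X_{\sigma^{-1}(1)}, \dots, X_{\sigma^{-1}(n)})[\sigma(i)]$ for every $i$.

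The first step is to record the useful reformulation of row-equivariance: saying that $f(X_1, \dots, X_n)[i] = f(X_{\sigma^{-1}(1)}, \dots, X_{\sigma^{-1}(n)})[\sigma(i)]$ for all $i$ is exactly the statement that, if $(Y_1, \dots, Y_n) := f(X_1, \dots, X_n)$, then $f(X_{\sigma^{-1}(1)}, \dots, X_{\sigma^{-1}(n)}) = (Y_{\sigma^{-1}(1)}, \dots, Y_{\sigma^{-1}(n)})$. In other words, permuting the rows of the input to $f$ permutes the rows of its output in the same way. I would state this as a brief observation (really a rewording of the definition) before invoking it.

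Then I apply $g$ to both sides. On the one hand, $(g \circ f)(X_1, \dots, X_n) = g(Y_1, \dots, Y_n)$, so the $i$-th row of the composition is $g(Y_1, \dots, Y_n)[i]$. On the other hand, $(g \circ f)(X_{\sigma^{-1}(1)}, \dots, X_{\sigma^{-1}(n)}) = g(Y_{\sigma^{-1}(1)}, \dots, Y_{\sigma^{-1}(n)})$ by the previous step, and its $\sigma(i)$-th row equals $g(Y_{\sigma^{-1}(1)}, \dots, Y_{\sigma^{-1}(n)})[\sigma(i)]$. Row-equivariance of $g$ applied to the input $(Y_1, \dots, Y_n)$ with the same permutation $\sigma$ then gives $g(Y_1, \dots, Y_n)[i] = g(Y_{\sigma^{-1}(1)}, \dots, Y_{\sigma^{-1}(n)})[\sigma(i)]$, which is exactly what was needed.

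There is essentially no obstacle here; the only thing to be careful about is bookkeeping of the permutation index, in particular not confusing $\sigma$ with $\sigma^{-1}$ when transferring from the definition on $f$ to the statement about $Y$. I would keep the proof to a few lines and close with a remark that, combined with the preceding lemma, this means any architecture built by composing row-wise operations with other row-equivariant building blocks (e.g., multi-head self-attention across the row dimension, once its equivariance is established) is automatically row-equivariant, which is the fact used to conclude permutation-equivariance of NPT.
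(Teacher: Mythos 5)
Your proof is correct and is essentially the paper's argument: the paper writes the same chain $f(g(\sigma X)) = f(\sigma g(X)) = \sigma f(g(X))$ in compact operator notation, while you unfold the index-level definition and make explicit the intermediate observation that row-equivariance means $f(\sigma X) = \sigma f(X)$. Your extra bookkeeping with $\sigma^{-1}$ is sound and merely spells out what the paper's shorthand takes for granted.
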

\begin{proof}
This result is widely known, but a proof here is included for completeness.  Let $f$ and $g$ be row-equivariant.
\begin{align}
    f \circ g(\sigma X) &= f ( g(\sigma X)) = f ( \sigma g(X)) = \sigma f(g(X)).
\end{align}
\end{proof}
\begin{lemma}
Let $W \in \mathbb{R}^{n \times m_1}$ and $X \in \mathbb{R}^{m_2 \times n}$. The function $X \mapsto XW$ is row-equivariant.
\end{lemma}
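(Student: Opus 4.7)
The plan is to reduce this statement to the preceding lemma (row-wise operations are row-equivariant) by observing that right-multiplication by $W$ \emph{is} a row-wise operation. First I would unpack the definition of matrix multiplication: the $(i,j)$ entry of $XW$ equals $\sum_k X_{ik} W_{kj}$, which depends only on the $i$-th row of $X$. Consequently, the $i$-th row of $XW$ is exactly $X_i W$, where $X_i$ denotes the $i$-th row of $X$ viewed as a row vector. This lets me rewrite the map as $X \mapsto (g(X_1), \ldots, g(X_{m_2}))$ with $g(v) := vW$, putting it in the exact form covered by the previous lemma and yielding row-equivariance immediately.

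An equally short alternative route goes through permutation matrices directly. Let $P_\sigma$ be the permutation matrix whose action on rows is $X \mapsto P_\sigma X$, i.e.\ $(P_\sigma X)_i = X_{\sigma^{-1}(i)}$. Then
\[
(P_\sigma X)\, W \;=\; P_\sigma (X W)
\]
by associativity of matrix multiplication, and reading off row $i$ on both sides gives $X_{\sigma^{-1}(i)} W = (XW)_{\sigma^{-1}(i)}$, which is precisely the row-equivariance condition from the definition. I would probably present the row-wise-operation argument as the main proof and mention the permutation-matrix identity only as a sanity check, since the former connects cleanly to the lemma just established.

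I do not anticipate any real obstacle: everything follows in a few lines from the definition of matrix multiplication. The one thing worth flagging for the reader is a purely notational clash: the symbol $n$ in the statement of this lemma denotes the inner dimension of the product (and the number of columns of $X$), whereas the $n$ in the definition of row-equivariance denotes the number of rows, which here is $m_2$. I would make this substitution explicit at the start so that the reduction to the previous lemma is unambiguous.
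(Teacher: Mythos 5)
Your proposal is correct, and your primary argument takes a slightly different route from the paper. The paper proves this lemma by a direct index computation: it expands $(\sigma X)W[i,j] = \sum_k \sigma X[i,k]\,W[k,j] = \sum_k X[\sigma^{-1}(i),k]\,W[k,j] = XW[\sigma^{-1}(i),j] = \sigma(XW)[i,j]$, which is essentially the index-level version of your secondary ``permutation matrix plus associativity'' sanity check. Your main argument instead reduces the statement to the earlier lemma that row-wise operations $X \mapsto (g(X_1),\dots,g(X_{m_2}))$ are row-equivariant, by observing that the $i$-th row of $XW$ is $X_i W$, i.e.\ $g(v) = vW$. Both are valid; your reduction buys brevity and reuses the established lemma, while the paper's explicit computation sets up the notation and style it then needs for the genuinely non-row-wise case of the attention product $XW^Q(XW^K)^\top$ in the following lemma, where no such reduction is available. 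Your flag about the notational clash (the $n$ in the lemma statement is the inner dimension, while the number of rows is $m_2$, playing the role of the $n$ in the definition of row-equivariance) is a fair point that the paper glosses over.
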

\begin{proof}
Let $\sigma X$ be a permutation of the rows of $X$. Then we have
\begin{align}
    (\sigma X)W[i, j] &= \sum \sigma X [i, k] W[k, j]\\
    &= \sum X[\sigma^{-1}(i), k] W[k, j] = XW[\sigma^{-1}(i), j] = \sigma(XW)[i, j].
\end{align}
\end{proof}
\begin{lemma}
The function $X \mapsto Att(XW^Q, XW^K, X W^V)$ is row-equivariant.
\end{lemma}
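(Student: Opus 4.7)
The plan is to reduce the statement to a permutation-equivariance property of the attention operation itself, and then to verify that property by tracking a permutation matrix through each of the three operations that make up $\text{Att}$: the scaled dot-product, the row-wise softmax, and the aggregation against the values. The previously established lemmas (right-multiplication by $W$ is row-equivariant, and compositions of row-equivariant maps are row-equivariant) do most of the heavy lifting once this core fact is in hand.

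First I would let $P$ denote the permutation matrix corresponding to $\sigma$ and apply the preceding lemma to each of $X \mapsto XW^Q$, $X \mapsto XW^K$, $X \mapsto XW^V$, so that $(\sigma X)W^Q = P(XW^Q)$ and analogously for $K, V$. By the composition lemma, it therefore suffices to show that for any $Q, K, V$ of compatible shapes,
\begin{equation*}
Att(PQ, PK, PV) \;=\; P\, Att(Q, K, V).
\end{equation*}
Expanding the dot-product scores gives $(PQ)(PK)^{T}/\sqrt{h} = P\bigl(QK^{T}/\sqrt{h}\bigr)P^{T}$, so the pre-softmax matrix is conjugated by $P$ on the left and $P^{T}$ on the right.

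The central sub-claim I would then establish is that the row-wise softmax commutes with this conjugation: for every matrix $A$,
\begin{equation*}
\mathrm{softmax}(P A P^{T}) \;=\; P\, \mathrm{softmax}(A)\, P^{T}.
\end{equation*}
This follows by an elementary entrywise calculation: the $(i,j)$ entry of $PAP^{T}$ is $A_{\sigma^{-1}(i), \sigma^{-1}(j)}$, and since softmax divides each row by the sum of exponentials over that row, a reindexing $k \mapsto \sigma^{-1}(k)$ of the column sum shows the normalizer is invariant, yielding $\mathrm{softmax}(A)_{\sigma^{-1}(i), \sigma^{-1}(j)}$, which is exactly the $(i,j)$ entry of $P\,\mathrm{softmax}(A)\,P^{T}$. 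Multiplying on the right by $PV$ then collapses the dangling factor via $P^{T}P = I$, giving $P\,\mathrm{softmax}(A)\,V$, as desired.

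The only real obstacle is the careful bookkeeping of where each $P$ and $P^{T}$ comes from and where it goes: $P^{T}$ is introduced by permuting the rows of $K$ (through the transpose), it survives the softmax only because softmax is row-wise and invariant to the column reindexing inside each row, and it is ultimately annihilated by the left factor $P$ appearing in $PV$. Once the softmax conjugation identity is in place, the overall claim is a one-line composition of row-equivariant maps, and combining this with the row-equivariance of $X \mapsto XW$ yields row-equivariance of $X \mapsto Att(XW^Q, XW^K, XW^V)$.
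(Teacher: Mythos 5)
Your proof is correct and follows essentially the same route as the paper's: the pre-softmax scores pick up a simultaneous row and column permutation, the row-wise softmax commutes with that double permutation, and the final multiplication by the permuted values cancels the column permutation to restore row-equivariance. The only difference is notational—you track permutation matrices $P$, $P^{T}$ and use $P^{T}P=I$, where the paper carries out the same bookkeeping entrywise with $\sigma^{-1}$ indices.
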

\begin{proof}
Let the row-wise softmax function be denoted $\omega(\cdot)$. Then we have
\begin{align}
    Att(XW^Q, XW^K, X W^V) &= \omega(XW^Q(XW^K)^\top/\sqrt{h})XW^V, \intertext{where}
    \sigma XW^Q(\sigma XW^K)^\top[i, j] &= \sigma(XW^Q) \sigma(XW^K)^\top [i, j] \\
    &= \sum \sigma(XW^Q)[i, k] \sigma(XW^K)[j, k] \\
    &= \sum XW^Q[\sigma^{-1}(i), k] XW^K[\sigma^{-1}(j), k] \\
    &= XW^Q(XW^K)^\top [\sigma^{-1}(i), \sigma^{-1}(j)] \\
    &=: A.
    \end{align}
Note that the above result states that the function $XW^Q(XW^K)^\top$ is \textit{not} row-equivariant because of the additional permutation of the columns. Let $\sigma$ denote a permutation operator on matrices. Then straightforwardly we have the following:
\begin{align}
    \omega(\sigma A / \sqrt{h})) &= \sigma \omega(A/\sqrt{h})\, . \\
    \intertext{Finally, it remains to show that the final matrix multiplication step restores the row-equivariance property we seek.}
    \sigma \underbrace{\omega(XW^Q(XW^K)^\top/\sqrt{h})}_{=:M} (\sigma X W^V) [i, j] &= \sigma(M) (\sigma X W^V) [i, j] \\[-1em]
    &= \sigma(M) \sigma( X W^V) [i, j] \\
    &= \sum M[\sigma^{-1}(i), \sigma^{-1}(k)]  (XW^V)[\sigma^{-1}(k), j] \\
    &= M (XW^V)[\sigma^{-1}(i), j].
\end{align}
Which shows that self-attention is row-equivariant.
\end{proof}
\begin{lemma}
The following hold:
\begin{enumerate}
    \item Multihead self-attention is equivariant.
    \item If $f$ and $g$ are row-equivariant, then the function $x \mapsto g(x) + f(x)$ is also row-equivariant.
    \item Res(H) is row-equivariant.
    \item MHSA(H) is row-equivariant.
    \item ABD is row-equivariant.
    \item ABA is row-equivariant.
\end{enumerate}

\end{lemma}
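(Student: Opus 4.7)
The plan is to prove parts (1)--(6) in order, leveraging the three previously established facts: row-wise operations are row-equivariant, right-multiplication by a fixed matrix $\Wb$ is row-equivariant, and single-head self-attention $\text{Att}(\Hb\Wb^Q, \Hb\Wb^K, \Hb\Wb^V)$ is row-equivariant. For (1), I would write $\text{MHSelfAtt}(\Hb) = \text{concat}_{\text{axis}=h}(\Ob_1, \dots, \Ob_k)\Wb^O$ with $\Ob_j = \text{Att}(\Hb\Wb^Q_j, \Hb\Wb^K_j, \Hb\Wb^V_j)$. By the single-head lemma each $\Ob_j$ is row-equivariant in $\Hb$; since the concatenation is along the feature axis $h$ and not the datapoint axis $n$, applying a row permutation $\sigma$ to $\Hb$ permutes each $\Ob_j$'s rows identically, so the concatenation has the same permutation of rows. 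Finally, right-multiplication by $\Wb^O$ preserves row-equivariance, yielding (1). For (2), the argument is the elementwise identity $\sigma(g(\Hb) + f(\Hb)) = \sigma g(\Hb) + \sigma f(\Hb) = g(\sigma \Hb) + f(\sigma \Hb)$, i.e., a row permutation commutes with entrywise addition.

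Parts (3)--(6) then follow by bookkeeping. Layer normalization LN is row-wise by construction, the feed-forward block rFF is row-wise (the ``r'' in rFF denotes row-wise), and the reshape between $\mathbb{R}^{n \times d \times e}$ and $\mathbb{R}^{n \times h}$ used around ABD touches only the feature axes and hence commutes with datapoint-index permutations. For (3), $\text{Res}(\Hb) = \Hb\Wb^{\text{res}} + \text{MHSelfAtt}(\text{LN}(\Hb))$ is the sum of a right-multiplication and a composition of row-equivariant maps; apply (2). For (4), $\text{MHSA}(\Hb) = \text{Res}(\Hb) + \text{rFF}(\text{LN}(\text{Res}(\Hb)))$ is similarly a sum of two row-equivariant maps, again using (2) and composition closure. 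For (5), ABD is MHSA pre- and post-composed with a feature-axis reshape, all row-equivariant. For (6), ABA stacks $\text{MHSA}(\Hb_i)$ over $i \in \{1,\dots,n\}$, which is exactly the canonical form $(\Hb_1,\dots,\Hb_n) \mapsto (g(\Hb_1),\dots,g(\Hb_n))$ with $g = \text{MHSA}$, and is therefore row-equivariant by the first auxiliary lemma.

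The main obstacle is the concatenation step in (1): one has to verify carefully that concatenating along the \emph{feature} axis a tuple of matrices that have each been row-permuted by the same $\sigma$ yields a matrix that is itself row-permuted by $\sigma$. This amounts to writing out the indices and observing that the row index is untouched by a column-axis concatenation, but it is the one place where the closure arguments alone do not suffice and the explicit axis structure of the operation must be used. Every other step is a direct application of the already-established composition-closure, addition-closure, and row-wise-operation lemmas.
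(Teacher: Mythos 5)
Your proposal is correct and follows essentially the same route as the paper: decompose each operation into single-head attention, row-wise maps (LN, rFF, ABA), fixed right-multiplications, feature-axis concatenations/reshapes, and sums, then invoke the closure lemmas already established. The only difference is one of explicitness — you spell out why the feature-axis concatenation and the $\mathbb{R}^{n\times d\times e}\leftrightarrow\mathbb{R}^{n\times h}$ reshape commute with row permutations, steps the paper dismisses as trivial — which is a fair bit of added rigor but not a different argument.
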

\begin{proof}
We show each item.
\begin{enumerate}
    \item We know that $X \mapsto O_i$ is equivariant from the previous lemma, and this trivially implies that $X \mapsto \text{concat}(O_1, \dots, O_k)$ will also be row-equivariant. Finally, because $\sigma A B = \sigma(AB)$, get that MHSelfAtt(H) is row-equivariant.
    \item Straightforward.
    \item Because LayerNorm is row-equivariant (being a function applied row-wise to the matrix), Res(H) is a sum of two row-equivariant functions and so by a previous result will also be row-equivariant.
    \item Because rFF is again a row-wise operation and so trivially row-equivariant, the previous results on sums and compositions of row-equivariant functions directly yield row-equivariance of MHSA.
    \item ABD is by definition an application of MHSA(H), and therefore is row-equivariant by the above result.
    \item ABA is a row-wise operation and is therefore trivially row-equivariant.
\end{enumerate}

\end{proof}

\begin{property}
NPT is row-equivariant.
\end{property}
\begin{proof}
Each layer of NPT has been shown to be row-equivariant. Because NPT is a composition of such row-equivariant functions, it is therefore row-equivariant.
\end{proof}

\section{Additional Results} \label{app:additional_results}

\subsection{Semi-Synthetic Experiments}
\label{appendix:protein_duplicate_app}
\subsubsection{Attention Maps for the Semi-Synthetic Experiments}
\label{appendix:protein_duplicate_add}

We here display additional results for the semi-synthetic experiments of \cref{subsection:npt_synthetic}.
In \cref{fig:protein_duplicate_add}, we display attention weights for Attention Between Datapoints (ABD) for all depths and a subset of heads of the architecture.
We see that some, but not all, attention heads display the desired diagonal lookup pattern.
Note that, in this case, one head would suffice to implement lookup and perfectly solve the task.

A brief comment on the attention maps with the ``double diagonal'' structure (e.g., depth 4, head 0):
we see that (a) original datapoints attend to the duplicate points and (b) duplicates also attend to duplicate datapoints.
Behavior (a) makes sense: NPT needs to attend to the duplicates from the originals to look up the target values.
This behavior in turn minimizes loss.
Behavior (b) is irrelevant to loss, because NPT does not need to predict anything for the duplicates, and no loss is computed.
However, (b) suggests that the query embeddings learned by the self-attention \emph{ignore} the masked out label column in the input.
Hence, the resulting queries for the originals and the duplicates would be identical -- both leading to high attention values for the keys of the duplicates -- and ultimately resulting in the double diagonals in \cref{fig:protein_duplicate_add}.

\begin{figure}[t]
    \centering
    \includegraphics{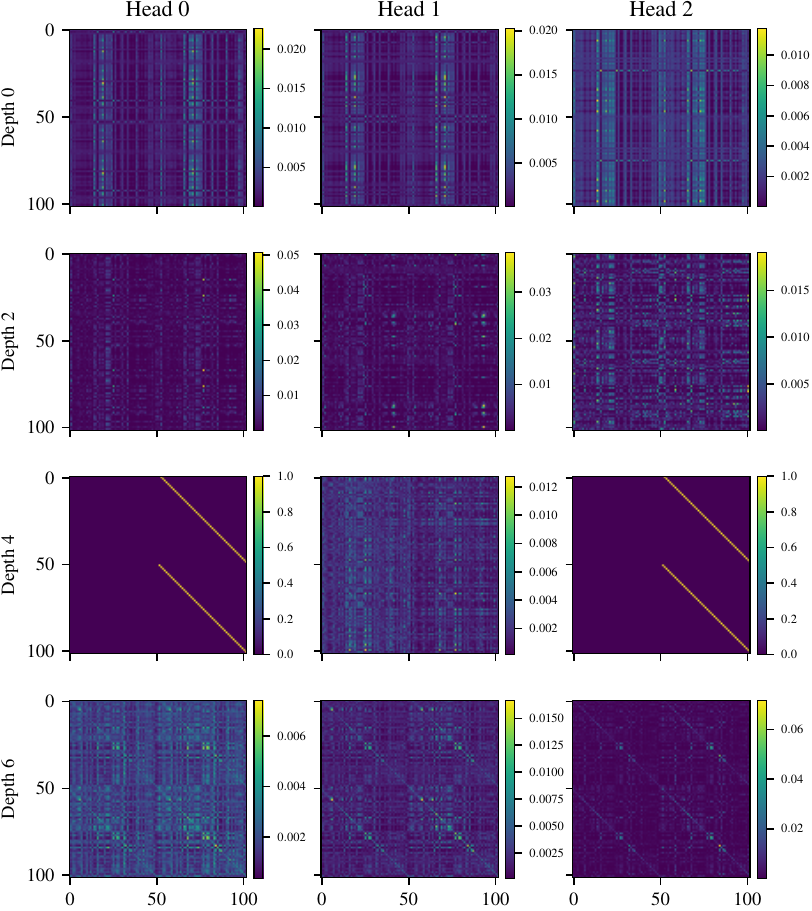}
    \caption{
    Visualizations of NPT attention maps for Attention Between Datapoints (ABD) for the semi-synthetic experiment at all model depths, a selection of heads, and a single batch of input data.
    Evidently, not all attention maps need to perform a ``lookup'' for the model to solve the task.
    In fact, some heads appear to learn almost query-independent behavior (e.g., heads 0, 1, and 2 at depth 0).
    }
    \label{fig:protein_duplicate_add}
\end{figure}
\FloatBarrier
\subsubsection{Modified Semi-Synthetic Experiments}
\label{appendix:protein_duplicate_nn_fail}
\begin{table}[t]
    \centering
    \caption{
    Variations of the semi-synthetic dataset that require learning of between-datapoint interactions more complex than simple lookups.
    While NPTs can learn complex interactions between datapoints, conventional non-parametric approaches lack flexibility and fail.
    }
    \vspace{.5em}
    \renewrobustcmd{\bfseries}{\fontseries{b}\selectfont}
    \sisetup{detect-weight,mode=text}
    \label{table:protein_synthetic_no_nn}
\begin{tabular}{@{}lrrrr@{}}
\toprule
\textit{Test RMSE} $\downarrow$ & \text{Original Synthetic} & \text{Random Feats.} & \text{Add One} & \text{Random Feats. + Add One}\\
\midrule
1-NN &\bfseries \num{0.00} & \num{7.19} & \num{6.11} & \num{7.80} \\
k-NN &\bfseries \num{0.00} & \num{5.42} & \num{5.18} & \num{5.64} \\
DKL & \bfseries \num{0.00} & \num{5.94} & \num{6.31} & \num{6.36} \\
NPT &  \num{0.34} & \bfseries \num{0.24} & \bfseries  \num{0.46} & \bfseries  \num{0.75} \\
\bottomrule
\end{tabular}
\end{table}
\paragraph{Setup.}
In \cref{subsection:npt_synthetic}, we mention that with some concessions the original lookup task can also be solved by standard non-parametric models.
However, we also mention that simple modifications to the task make it, again, unsolvable for any model of which we are aware other than NPT.
We here demonstrate these hypotheses for two non-parametric models: k-Nearest Neighbors~(k-NN) and Deep Kernel Learning~(DKL).

First, we apply k-NN and DKL to the original duplication tasks.
As mentioned in the main text, this already requires us to make some concessions:
we now need to explicitly split the input data into a global training set (all duplicated datapoints) as well as a test set (all original datapoints).
That is, if all duplicate datapoints make up the training set, then non-parametric models are able to predict perfectly on the original datapoints, because most non-parametric models rely on distances in some manner, and here, distances in input feature space are sufficient to successfully match entries.
This is trivially true for k-NN but also for DKL, where the RBF kernel of the GP will lead to the desired ``matching behavior'' as long as the learned neural network embedding does not collapse distances.

In other words, NPTs would ideally learn a k-NN-style prediction for the semi-synthetic dataset.
Crucially, while non-parametric models predict based on distances because of fixed design choices, NPTs \emph{learn} this behavior and can just as well learn other more complicated relations between datapoints.

We now present two modifications to the semi-synthetic dataset; NPT can accommodate them because the model learns the nature of interactions, but they significantly affect the performance of the fixed kernel methods.
\begin{itemize}
    \item \textbf{Random Features}: A subset of the features are randomized across both original and duplicate datapoints independently.
    Specifically, we overwrite the entries of the last three features with noise drawn independently from a Gaussian distribution $\mathcal{N}(1, 1)$.
    To solve the task, matches between datapoints must now be computed using the subset of non-randomized features only.
    \item \textbf{Add One}: We add \num{1} to all target regression values \emph{only} for the duplicate datapoints.
    Matches can still be made based on all features, but now a \num{1} must be subtracted from the lookup value to solve the task.
\end{itemize}
As in the original setting, we train the models on the modified semi-synthetic datasets and check with novel test data whether they have learnt the correct relational mechanism underlying the experiment.

Note that the Random Features and Add One settings also distinguish our setup from prompting in natural language processing literature \citep{radford2019language, brown2020language} because the original datapoints are no longer ``correct'' input-output pairs; the model must use an underlying relational structure instead of memorization to solve the task.

\paragraph{Results.}
\Cref{table:protein_synthetic_no_nn} presents RMSE values obtained by the models when trained on the original duplication task, the two modifications separately, as well as both modifications applied.

Evidently, for NPTs, the different scenarios do not lead to a large difference in performance; in all instances, they achieve near-perfect loss because their predictions leverage attention between datapoints.
Careful optimization of NPT training convergence would likely lead to a further reduction in loss.
Nevertheless, the achieved losses by NPT are more than a magnitude lower than those on the original data and correspond to a near-perfect Pearson-correlation with the target values of $r>99.9\%$.
We conclude that NPTs successfully learn to attend to the correct subset of features, to subtract \num{1} from the lookup target values, or to do both at the same time.

Next, we consider the non-parametric models.
First, we confirm in \emph{Original Synthetic} that the non-parametric models can indeed solve the original lookup task. %
However, we find that neither DKL nor k-NN can accommodate any of the modifications, reverting to an RMSE that is worse than the performance of all baselines on the original Protein dataset, see \cref{table:uci_reg_rmse}.\footnote{
In fact, the RMSEs are about equal to the standard deviations of the target values in the Protein dataset, \num{6.11}, such that the values obtained by the models on the modified setups amount to random guessing.
We further note that we apply all modifications to the standardized input data, such that the Add One setting adds a full standard deviation for the final evaluation in \cref{table:protein_synthetic_no_nn}.
}

For $k$-Nearest Neighbor, $k=1$ is clearly optimal in the original semi-synthetic setup.
However, k-NN cannot learn to ignore certain attributes (Random Features) and or to modify looked-up values.
Setting $k>1$ actually improves prediction because it considers other matching points in addition to the (now misleading) duplicates for prediction.
However, even with $k>1$, k-NN does not achieve much better than guessing performance on the modified tasks.

DKL also fails to accommodate any of the presented task modifications.
We suspect that DKL, in theory, should be able to solve the Random Features task.
That is, DKL should be able to use the neural network to learn a representation that discards any information from the randomized columns.
We were unable to achieve this, but it may be possible with additional adaptations to the model.
Ideally, we would condition the GP on new ``test data'' (the duplicates) in each minibatch during training.
This was not easily possible with the GPyTorch codebase.\footnote{
Gardner, Jacob R., et al. "Gpytorch: Blackbox matrix-matrix gaussian process inference with gpu acceleration." NeurIPS 2018.
} %
At test time however, we did directly reconstruct an exact GP using embedded inputs and RBF scale parameters learned during training.

In any case, DKL can never solve the Add One scenario because, after independently transforming features with a neural network, DKL simply applies a GP in embedding space.
This means that it will always naively interpolate target values between training data (duplicates) and test data (features) in embedding space, and cannot \emph{learn} interactions between points, such as subtracting 1 from all duplicate targets.

Even further, there is another easy option of how to construct this experiment such that only NPT will be able to solve it: we could \emph{randomly sample the attribute} for which we mask out the entry, i.e., all columns can now be target columns.
All non-parametric models presented here rely on a fixed set of features as input to predict for a fixed target column.
They are not compatible with this style of ``imputation'' problem, i.e., there is no way to even take as input data like this in such models.
NPTs, however, take both features and targets as input, only using the masking mechanism to distinguish between features and targets as well as train and test data.
Hence, they can easily adapt to this scenario.

The bad results for the non-parametric models also highlight that these models must predict non-parametrically, unlike NPT, which could always fall back to parametric prediction if it cannot learn the interactions required for a task.

\textbf{(k)-NN Hyperparameter details.} We use the scikit-learn \citep{scikit-learn} implementation of (k)-Nearest Neighbors, where we exhaustively search for neighbors by setting \texttt{algorithm=brute} and otherwise use default parameters.
For $1$-NN, we set $k=1$, for $k$-NN we sweep over $k\in[1, \dots, 10]$ and report results for the $k$ that achieved the best performance.

\textbf{DKL Hyperparameter details.} We use the GPyTorch implementation of Deep Kernel Learning.
We perform a non-exhaustive random sweep over a selection of hyperparameters and select those with best validation performance.
This results in the following changes from the default hyperparameter values:
for the Original Synthetic and Add One scenario we disable dropout, use hidden layers $[100, 100]$, a learning rate of $0.0001$, train for a maximum of $30000$ epochs, with $256$ inducing points, $8$ features, batch size of $128$, and early stopping patience on the validation loss of $20$ epochs.
For the Random Features and the Random Features + Add One scenarios, we arrive at the same configuration, except that we train with $64$ inducing points.

\subsection{Attention Between Datapoints on Real Data}

\subsubsection{Corruption Experiments}
\label{appendix:subsection:sample_corr}
In our Data Corruption experiments in \cref{subsection:sample_corr}, we make use of \cref{alg:sample_corr} below.
When predicting for a datapoint $k$, this algorithm completely destroys information from all other datapoints $i\neq k$ in the batch $b$ by randomly permuting attribute values across all other datapoints.
Therefore, if NPT's loss increases after corruption, it must meaningfully rely on attention between datapoints for prediction.

\begin{algorithm}[H]
\SetAlgoLined
\DontPrintSemicolon
 \KwIn {list of masked minibatches $\mathcal{B} = [\Xb^{(b)} \in \mathbb{R}^{K \times d} \mid b \in 1 \dots B]$, unmasked label column $\Xb_{:,d}$, trained model $f: \Xb^{(b)} \rightarrow \Xb^{(b)}$, batch size $K$, loss function $\mathcal{L}$, number of attributes (including features and target) $d$}
\textbf{Returns:} test loss under data corruption $\mathcal{L}^{\text{corr}}$\;
\BlankLine
 $\mathcal{L}^{\text{corr}} \leftarrow 0$\;
 \For{$\Xb^{(b)}$ in $\mathcal{B}$}{
    \For{$k$ in $1 \dots K$}{
        $\Xb^{(b,k)} \leftarrow \Xb^{(b)}$ \tcp*{initialize batch to be corrupted}
        \For{$j$ in $1 \dots d$}{
            $\Xb^{(b,k)}_{i \neq k, j} \leftarrow \texttt{permute}_{\text{axis} = i}(\Xb^{(b,k)}_{i \neq k, j})$ \tcp*{permute each attr.~column indep.}
        }
        $\mathcal{L}^{\text{corr}} \mathrel{+}= \mathcal{L}(f(\Xb^{(b,k)})_{k,d}, \Xb_{k,d})$ \tcp*{compute loss w/ unmasked label column}
    }
 }
\Return $\mathcal{L}^{\text{corr}}$
\caption{Data Corruption} \label{alg:sample_corr}
\end{algorithm}

Alternatively, we could also input datapoints \emph{individually}, i.e., decrease the minibatch size to 1, to test if NPT depends on attention between datapoints.
Indeed, we find that performance also deteriorates in this scenario.
However, we believe that the Data Corruption experiment provides stronger evidence because it preserves batch statistics across attributes.
This makes sure that performance deterioration is not caused by spurious factors, such as a decreased batch size that was not encountered in training.
While NPT is generally compatible with varying batch sizes, we leave a thorough investigation of this for future work.

\subsection{Real Data – \emph{To Which} Other Points Does NPT Attend?}
\subsubsection{Attention Maps on Real Data}
\label{appendix:attention_maps_real_data}
In \cref{fig:protein_real_add}, we display ABD attention maps of NPT for the Protein regression dataset in addition to the one shown in \cref{sec:npt_relies_on_similar}.
For visualization purposes, we sort the input datapoints with respect to their feature space distance to an arbitrary test datapoint.
This is to ensure that the global structure of the attention maps in \cref{fig:protein_real_add} has meaning.
Specifically, nearby entries in the attention maps belong to input datapoints that are close in input space.
With this transformation, the diagonal patterns appearing in \cref{fig:protein_real_add} clearly suggest that our model is attending more strongly between datapoints that are similar in input space.
Similar to the semi-synthetic experiments, some but not all attention heads display this pattern of interest.

\begin{figure}[p]
    \centering
    \includegraphics{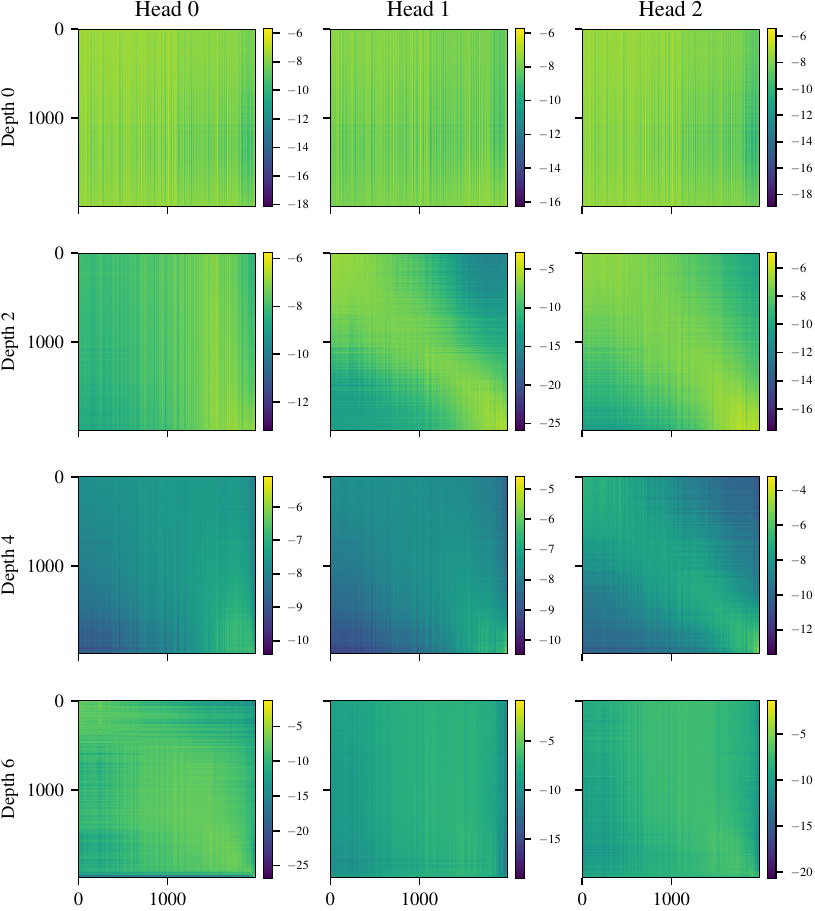}
    \caption{
    Visualizations of the Attention Between Datapoints (ABD) attention maps for real data -- here, the Protein regression dataset -- for all depths and a selection of heads.
    Input to the model is sorted such that datapoints that are similar in input space have nearby indices.
    The diagonal pattern (e.g., depth 2 and head 1) indicates that the model attends to similar inputs more strongly.
    For illustration purposes, we here plot the log of the attention values.
    }
    \label{fig:protein_real_add}
\end{figure}
\FloatBarrier
\subsubsection{Data Deletion Experiment}
\label{appendix:sample_deletion_experiments}
\begin{algorithm}[t!]
\SetAlgoLined
\DontPrintSemicolon
\nl \KwIn{Masked data $\Xb\in\mathbb{R}^{n\times d}$, active sample index $i^*$.}
\BlankLine
\nl $\hat{y} \leftarrow \text{NPT}(\Xb)_{i^*, d}$ \tcp*{original NPT prediction at active datapoint}
\nl $\Delta_\text{max} \leftarrow 0.1$ \tcp*{maximum allowed change in prediction}
\nl $\Delta_\text{it} \leftarrow 0.01$ \tcp*{initialize maximum change per deleted datapoint}
\nl $N_\textup{max-retry} \leftarrow 50$ \tcp*{maximum number of retries before increasing $\Delta_\textup{it}$}
\nl $\epsilon \leftarrow 0.02$ \tcp*{fraction of points remaining at which we break}
\nl $\mathcal{R} \leftarrow \{1, \dots, n\} \setminus \{ i^* \}$  \tcp*{initialize remaining set}
\nl $N_\textup{retry} \leftarrow 0$ \tcp*{initialize no.~of retries}\;
\nl \While{True}{
\BlankLine
\nl    $c = $\texttt{random\_choice}$(R)$ \tcp*{random proposal for data deletion}
\nl    $\hat{y}_\text{proposal} = \text{NPT}(\Xb_{(\mathcal{R} \setminus \{c\}) \cup \{i^*\}})_{i^*, d}$ \tcp*{predict without proposed datapoint}
\nl    $\Delta_\textup{proposal} = \frac {\lvert \hat{y}_\text{proposal} - \hat{y}\rvert}{\hat{y}}$\tcp*{change in pred. when deleting proposal}
\nl    \uIf{$\Delta_\textup{proposal} < \Delta_\text{it}$}{
\BlankLine
\nl         \uIf{$\Delta_\textup{proposal} < \Delta_\text{max}$}{
\nl            $\mathcal{R} \leftarrow \mathcal{R} \setminus \{c\}$\tcp*{delete datapoint from input}
\nl            $N_\textup{retry} \leftarrow 0$\;}
\nl         \uElse{
\nl            break \tcp*{exceeded maximum change}
}
    }
\BlankLine
\nl    \uElse{
\nl        $N_\textup{retry} \leftarrow N_\textup{retry} + 1$ \tcp*{candidate change was too large, try again}
    }
\BlankLine
\nl    \uIf{$N_\textup{retry} \ge N_\textup{max-retry}$}{
\nl        $\Delta_\textup{it}\leftarrow 1.1 \cdot \Delta_\textup{it}$ \tcp*{increase allowed change per iteration}
\nl        $N_\textup{retry} \leftarrow 0$\;
    }

\BlankLine
\nl    \uIf{$\rvert \mathcal{R} \lvert < \epsilon \cdot n$}{
\nl        break \tcp*{less than $\epsilon \%$ of original datapoints remaining}
    }
}
\nl \Return $\mathcal{R}$
\caption{Data Deletion} \label{alg:sample_deletion}
\end{algorithm}
\begin{figure}[t]
    \centering
    \includegraphics{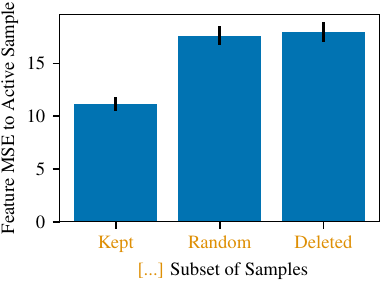}
    \caption{
    When predicting for any given datapoint, NPT prefers to keep similar datapoints around.
    Displayed are average feature space differences and their standard errors between the active datapoint and the sets of kept, random, and deleted datapoints for a single batch.
    \label{fig:protein_real_sample_deletion}
    }
\end{figure}
We here give full details on the Data Deletion experiment presented in \cref{sec:npt_relies_on_similar}.
To recap, we consider the prediction of NPT for a single test sample $i^*$.
We then iteratively delete other datapoints from the input if they do not significantly change the prediction of NPT on $i^*$.
\Cref{alg:sample_deletion} describes this in detail.
We are then interested in differences between the deleted and the kept datapoints.
Specifically, we compare the average feature space distance in input space between the active datapoint $i^*$ and either the kept datapoints $\mathcal{R}$ or deleted datapoints $\{1, \dots, n\} \setminus (\{i^*\} \cup \mathcal{R})$, obtaining average distances $D_{i^*, \textup{kept}}$, $D_{i^*, \textup{deleted}}$.
We break out of the deletion algorithm if less than $\epsilon\%$ of the original points remain, to reduce variance in our estimates of the kept statistic.
We repeat \cref{alg:sample_deletion} for all \num{5567} test points $i^* \in \mathcal{D}_\textup{test}$ in the Protein regression dataset.

We perform a Wilcoxon signed-rank test on the pairs $\left\{D_{i^*, \textup{kept}}, D_{i^*, \textup{deleted}} \right\}_{i^*\in\mathcal{D}_\textup{test}}$ to determine if the median of the kept datapoints is less than the median of the deleted ones.
The test is highly significant at $p\approx 0$, i.e., smaller than the floating point precision of SciPy Stats allows.
The raw Wilcoxon statistic is $3125889.5$.

To make sure the difference is not an effect of sample size, we also construct a set of average differences to a set of randomly drawn datapoints.\footnote{%
There are many fewer kept than deleted datapoints.
Further, there are outliers in the dataset, and these affect the deleted datapoints more often than the kept datapoints.
We find that the average distance between a \emph{random} subset and the \emph{deleted} (not the kept!) datapoints also becomes statistically significantly smaller at large sample sizes.
Hence, we compare the \emph{deleted} datapoints to a \emph{random} subset to control for size effects.
}
That is, instead of using \cref{alg:sample_deletion} for \emph{targeted} deletion, we \emph{randomly} construct $\mathcal{R}$, essentially only applying lines \num{10} and \num{15} of \cref{alg:sample_deletion}.
For each active test row $i^*$, we randomly delete as many datapoints as were deleted in targeted fashion.
A Wilcoxon signed-rank test between the distances for the random and kept subset is likewise significant at $p\approx 8.77 \cdot 10^{-130}$.
This is the value we report in the main body.

We also run a computationally more demanding version of the algorithm with $\Delta_\textup{it}\leftarrow 0.005$, $\epsilon\leftarrow 0.01$ to see how many points we can successfully delete.
This version of the algorithm requires more computation which is why we limit execution to the test datapoints of a single batch.
The results are statistically significant at $5.26\cdot 10^{-49}$ for kept $<$ deleted and $8.38\cdot 10^{-39}$ for kept $<$ random for a Wilcoxon signed-rank test.
We illustrate the differences between the distances in \cref{fig:protein_real_sample_deletion}.
We further note that using \cref{alg:sample_deletion}, we are able to reduce the set of datapoints present in the input to \num{1}\% of the original $n$ for \num{79.5}\% of active test datapoints and to \num{10}\% in \num{99.5}\% of cases.
Percentages refer to $n = 2048$ datapoints in total, of which $398$ were test datapoints.

All in all, these experiments strongly suggest that NPT relies on interactions between similar datapoints for prediction.

\subsection{Ablation Study 1: NPT Hyperparameters} \label{appendix:ablation_study}

We conduct an ablation study on the Protein and Boston Housing datasets (Table \ref{table:ablation}).
For Protein, the same 0.7/0.1/0.2 train/validation/test split is used for all model configurations.
Boston Housing uses a 0.7/0.2/0.1 train/validation/test split with 10-fold cross-validation.

Despite the significant difference in dataset sizes between Boston Housing $(n = \num{506})$ and Protein $(n = 45730)$, and the fact that Boston Housing includes both categorical and continuous variables, the base models used for each dataset are nearly identical. 

On both datasets, we use an NPT model with 8 layers, 8 heads, per-attribute hidden dimension $e = 128$, feature and target masking with $p = \num{0.15}$ for each, a cosine annealing schedule for the loss tradeoff $\lambda$, the LAMB \citep{you2020lamb} optimizer with Lookahead \citep{zhang2019lookahead}, a flat-then-anneal learning rate schedule with cosine decay and base learning rate \num{0.001}, dropout with rate $0.1$ on the attention weights and after linear layers, and gradient clipping at $1$.
This configuration is essentially the same as the \texttt{NPT-Base} configuration described in \cref{app:npt_training}, which we use with minimal per-dataset modifications for all other results in this work.

Different in our base models between the two datasets are the following settings.
The Boston Housing model takes as input the full dataset (i.e., batch size $= \num{507}$) and Protein uses minibatching with batch size $= \num{2048}$. 
Boston Housing trains for \num{20000} steps, and Protein for \num{400000}.
The learning rate is constant for the first \SI{70}{\percent} of steps for Protein, but only for the first \SI{50}{\percent} of steps for Boston, starting the learning rate annealing earlier to defend against overfitting on the small dataset.%
These changes directly result from the different dataset sizes.

\begin{table}[t]
    \centering
    \caption{NPT ablation study: test root mean-squared error (RMSE) on the Protein and Boston Housing regression datasets.}
    \vspace{.5em}
    \renewrobustcmd{\bfseries}{\fontseries{b}\selectfont}
    \sisetup{detect-weight,mode=text,round-mode=figures,round-precision=3} %
    \label{table:ablation}
\begin{tabular}{@{}lcc@{}}
\toprule
\textit{Test RMSE} ($\pm$ Std Err) $\downarrow$ & \text{Protein} & \text{Boston}\\
\midrule
\text{Base NPT} & \num{3.41} & \num{3.00 +- 0.23}\\
\text{No Semi-Supervision} & \num{3.38} & \num{3.38 +- 0.46} \\
\text{No Target Masking} & \num{3.32} & \num{2.93 +- 0.18}\\
\text{No Feature Masking} & \num{3.56} & \num{2.95 +- 0.21}\\
\text{No Feature Masking, No Target Masking} & \num{3.58} & \num{3.20 +- 0.26}\\
\text{Feature Mask} $p = 0.15 \rightarrow p = 0.5$ & \num{3.87} & \num{3.39 +- 0.23}\\
\text{Target Mask} $p = 0.15 \rightarrow p = 0.5$ & \num{3.37} & \num{3.11 +- 0.28} \\
\num{8} $\rightarrow$ \num{4} \text{Layers} & \num{3.43} & \num{3.30 +- 0.41} \\
\num{8} $\rightarrow$ \num{16} \text{Layers} & \num{3.36} & \num{3.05 +- 0.24} \\
\num{8} $\rightarrow$ \num{4} \text{Heads} & \num{3.42} & \num{3.25 +- 0.30} \\
\num{8} $\rightarrow$ \num{16} \text{Heads} & \num{3.37} & \num{3.20 +- 0.39}\\
\text{Tradeoff} $\lambda\ = 0.5$ & \num{3.50} & \num{2.96 +- 0.25} \\
\bottomrule
\end{tabular}
\end{table}

As \cref{table:ablation} shows, the performance of NPT is robust to a variety of significant hyperparameter choices.
This illustrates that practitioners will likely \emph{not need to spend much time tuning hyperparameters} when applying NPT to novel datasets.
We now give results for the ablation study on the Protein and Boston datasets separately.

\paragraph{Protein Dataset.} 
See \cref{table:ablation} for results and performed ablations.
It is computationally too expensive for us to perform full cross-validation over all ablations for the Protein regression dataset.
Instead, we report the results of a single 5-fold cross-validation for the Base NPT configuration on Protein (also varying the model random state).
This results in an RMSE of $3.40 \pm 0.05$ ($\sigma$).
The standard deviation of the 5-fold cross-validation allows us to roughly gauge which ablations have significant effect.
Given the results in \cref{table:ablation}, we find that the majority of ablations do not lead to meaningful changes in performance.
Only the somewhat dramatic changes to the optimization of NPT result in its performance falling from the top rank on the Protein Dataset (second rank CatBoost has $\text{RMSE} = \num{3.51}$): 
removing stochastic feature masking ($p_{\text{feature}} = 0$), removing both stochastic feature masking ($p_{\text{feature}} = 0$) and stochastic target masking ($p_{\text{target}} = 1$, training targets are always masked out at training time and NPT therefore cannot learn to attend to training targets at test time), or changing $p_{\text{feature}}$ to 0.5 (meaning that 50\% of all input features are masked out).
NPT appears to be particularly robust to changes in model complexity, e.g., depth and number of heads, although the results suggest that we could have further increased the size of Base NPT to achieve slightly higher performance.

\paragraph{Boston Dataset.} 
See \cref{table:ablation} for results and performed ablations.
For the Boston dataset, we repeat ablations over all 10 CV splits.
Similarly, ablations on the Boston dataset are largely inconsequential;
none of them result in a statistically significant change in performance from the base model.
The second rank performer on Boston is MLP, at $\text{RMSE} = 3.32$.
Only ablation of semi-supervision or changing $p_{\text{feature}}$ to 0.5 result in a change in the top ranking of NPT among the baselines.

Altogether, the ablation study supports the claim that NPT can be applied successfully with very little tuning to datasets of vastly different sizes and feature types.
Changes in model depth and number of heads do not appear significant, but using a reasonably low feature masking probability (e.g., 15\%, as has been commonly used in the literature \citep{devlin2019bert}) may be important to stable training.

Supported by these ablations, we sweep over only a small selection of configurations for our main benchmark comparison in \cref{sec:uci_benchmarks}.
And indeed, it seems that NPT is robust to hyperparameter changes, given that these configurations perform well across vastly different settings (binary and multi-class classification, datasets with millions of datapoints, etc.) than those explored in the ablations.
See \cref{app:uci_benchmarks} for details.

We speculate that NPT's robustness stems from (a) being a relatively overparametrized architecture that is powerful enough to model a wide variety of datasets and (b) from the effective regularization introduced by the feature masking mechanism.
Finally, we emphasize that the aim of this work is to introduce the NPT architecture and examine its properties, not to spend significant effort and compute resources on achieving top performance across all benchmarks.

\subsection{Ablation Study 2: NPT without ABA and NPT without Feature Masking}
\label{sec:new_ablation_aba_feature_masking}
We next present an additional ablation study targeting two core components of NPTs across all datasets: the Attention Between Attributes (ABA) layer and the stochastic feature masking.

\textbf{ABA Layer.} First, we perform an ablation to test if ABA layers are beneficial in practice.
For this, we simply leave out the ABA layers, such that the MLP at the end of the ABD layers (see ``rFF'' in \cref{eq:MHSA}) is now the only way for the model to independently transform the features of input datapoints.

Our results, given in \cref{table:ablation_aba_feature_masking}, show that, generally, ABA is a useful component of the NPT architecture. 
Leaving out ABA increases performance only for 3/10 datasets.
Interestingly, all three of these datasets are regression tasks, which may warrant further investigation.
We observe the largest difference for the Poker Hands dataset, which requires complex reasoning between input features: in the same number of training steps, the ablation only achieves 57.4\% accuracy compared to 99.3\% for full NPT.
These results support our hypothesis that ABA is useful when the dataset requires complex transformations of the features.
Our most general recommendation would be to default to using NPTs with ABA layers, as they boost performance on the majority of datasets we examine.
However, if practitioners can spend the extra compute, exploring NPTs without ABA can be worthwhile.

\textbf{Stochastic Feature Masking.}
We perform an ablation to test if the stochastic feature masking objective (cf.~\S\ref{subsection:masking}) is beneficial in practice.
For this, we simply disable all stochastic masking of input features by setting  $p_\textup{features}=0$.

Our results, also in \cref{table:ablation_aba_feature_masking}, show that for 9/10 datasets, enabling feature masking yields at least a small improvement in performance.
Disabling feature masking is detrimental to the performance on the Poker Hands dataset, leading to a 30\% drop in accuracy.
Again, our general recommendation would be to use NPTs with feature masking by default, as it rarely seems to decrease performance and sometimes helps significantly, but to explore NPTs without feature masking if feasible.

\begin{table}[t]
    \centering
    \vspace{.5em}
    \renewrobustcmd{\bfseries}{\fontseries{b}\selectfont}
    \sisetup{detect-weight,mode=text,group-minimum-digits = 4} %
    \caption{%
    Additional ablation studies.
    We study ablations of NPT (a) without ABA layers and (b) without stochastic feature masking.
    In both cases, performance tends to decrease.
    These results suggest that both ABA layers and stochastic feature masking contribute positively to the performance of NPTs.
    For the small datasets, we report mean values and standard errors over \num{10} CV splits. 
    }
    \label{table:ablation_aba_feature_masking}
\begin{tabular}{@{}l
S[detect-weight]
S[detect-weight]
S[detect-weight]
S[detect-weight]
@{}}
\toprule
    & \text{NPT without ABA} & \text{NPT without Feature Masking} & \text{Default NPT} &  \\
\midrule
Classification &&& \\
\midrule
Poker Hand (Acc.~$\uparrow$) & \num{57.4} & \num{69.7} & \bf{\num{99.3}}  \\
Forest Cover (Acc.~$\uparrow$) & \num{95.5} & \num{96.0} & \bf{\num{96.7}}  \\
Higgs Boson (AUC~$\uparrow$) & \num{0.859} & \num{0.871} & \bf{\num{0.892}}  \\
Income (AUC~$\uparrow$) & \bf{\num{0.952}} & \bf{\num{0.952}} & \bf{\num{0.952}}  \\
Kick (AUC~$\uparrow$) & \num{0.767} & \num{0.766} & \bf{\num{0.770}}  \\
Breast Cancer (AUC~$\uparrow$) &  \num{0.992 +- 0.008} & \num{0.996 +- 0.006} & \bf{\num{0.997 +- 0.001}}  \\
\midrule
Regression &&& \\
\midrule
Boston Housing (RMSE~$\downarrow$) &  \num{3.22 +- 0.25} & \num{3.18 +- 0.35} & \bf{\num{2.92 +- 0.15}}  \\
Yacht (RMSE~$\downarrow$) &  \num{1.15 +- 0.11} & \bf{\num{0.50 +- 0.06}} & \num{1.27 +- 0.15}  \\
Concrete (RMSE~$\downarrow$) &  \bf{\num{4.79 +- 0.12}} & \num{5.37 +- 0.20} & \num{5.21 +- 0.20}  \\
Protein (RMSE~$\downarrow$) &  \bf{\num{3.29}} & \num{3.59} & \num{3.41} \\
\bottomrule
\end{tabular}
\end{table}

\subsection{Computational Cost of Non-Parametric Transformers}
\label{app:computational_cost}

We next compare the computational requirements of NPT against the various baselines.
More specifically, we compare experiment runtimes and maximum memory usage on the Protein and Higgs datasets.
We choose these datasets because they are representative of medium and large datasets in terms of computational requirements, with \num{45730} and \num{11000000} datapoints respectively. 
Note that, while we re-use hyperparameter configurations across datasets for NPTs, the baselines require a novel hyperparameter search to be performed for each dataset (cf.~Appendices \ref{app:npt_details} and \ref{app:uci_benchmarks}).
Below, we include the cost of hyperparameter optimization for the baselines.

Note that these numbers only provide a rough ordering of the compute and memory costs of the various methods. 
We did \emph{not} optimize the baselines or NPT for memory usage, training time, or prediction speed. 
Additionally, while NPTs rely on GPU-accelerated PyTorch code, many of the baselines are CPU-only: therefore, the results depend on our particular CPU and GPU choices.

We also give the number of CPUs used in each experiment for each baseline.
Here, we maximize the number of CPUs used in parallel execution in order to speed up training.
This is mainly limited by the memory used per process: e.g., if we list \# CPUs as 1, this does not mean that we used a machine with only 1 CPU, but rather that each process used a significant amount of the total available memory and hence we could not increase the number of CPUs used in parallel.
Note that, additionally, for the CPU baselines, we made use of high-memory instances when this was necessary to avoid out-of-memory issues.

In summary, the numbers we give are a rough indication of the computational cost that a practitioner should expect to require in order to reproduce our results.
It is likely that by tuning aspects of our setup, both for NPTs and the baselines, memory usage and/or runtimes could be improved.

\begin{table}[p]
    \centering
    \caption{Protein dataset (45,730 datapoints): compute and memory requirements of hyperparameter tuning for baselines and training time of the selected hyperparameter configuration for NPTs. We report the number of CPUs used in execution, execution time, and peak memory usage, where the relevant bottleneck is main memory usage for CPU-based methods and GPU memory usage for GPU-based methods (i.e., TabNet and NPT).}
    \vspace{.5em}
    \renewrobustcmd{\bfseries}{\fontseries{b}\selectfont}
    \sisetup{detect-weight,mode=text,round-mode=figures,round-precision=3} %
    \label{table:protein_compute_mem_usage}
\resizebox{1\textwidth}{!}{%
\begin{tabular}{@{}lrrrr@{}}
\toprule
\textit{Metric} & \text{\# CPUs} & \text{Execution Time} & \text{Peak Main Memory (GB)} & \text{Peak GPU Memory (GB)}\\
\midrule
Random Forest & \num{8} & 13h 33m 58s & 7.82 & -- \\
Gradient Boosting & \num{1} & 47m 51s & 11.17 & -- \\
XGBoost & \num{8} & 10m 31s & 2.94 & -- \\
CatBoost & \num{1} & 8m 33s & 11.27 & -- \\
LightGBM & \num{8} & 21s & 1.65 & -- \\
MLP & \num{64} & 42m 14s & 8.96 & -- \\
k-NN & \num{8} & 1m 8s & 40.47 & -- \\
TabNet & \num{1} & 1h 33m 35s & 16.00 & 3.72 \\
NPT & \num{4} & 11h 51m 25s & 4.42 & 6.17 \\
\bottomrule
\end{tabular}
}
\end{table}

\begin{table}[p]
    \centering
    \caption{Higgs dataset (11,000,000 datapoints): compute and memory requirements of hyperparameter tuning for baselines and training time of the selected hyperparameter configuration for NPTs. We report the number of CPUs used in execution, execution time, and peak memory usage, where the relevant bottleneck is main memory usage for CPU-based methods and GPU memory usage for GPU-based methods (i.e., TabNet and NPT).}
    \vspace{.5em}
    \renewrobustcmd{\bfseries}{\fontseries{b}\selectfont}
    \sisetup{detect-weight,mode=text,round-mode=figures,round-precision=3} %
    \label{table:higgs_compute_mem_usage}
\resizebox{1\textwidth}{!}{%
\begin{tabular}{@{}lrrrr@{}}
\toprule
\textit{Metric} & \text{\# CPUs} & \text{Execution Time} & \text{Peak Main Memory (GB)} & \text{Peak GPU Memory (GB)}\\
\midrule
Random Forest & \num{1} & 13d 13h 5m 6s & 189.18 & -- \\
Gradient Boosting & \num{1} & 3d 19h 45m 56s & 26.65 & -- \\
XGBoost & \num{8} & 23h 26m 17s & 108.54 & -- \\
CatBoost & \num{8} & 2h 6m 35s & 78.34 & -- \\
LightGBM & \num{8} & 55m 57s & 35.13 & -- \\
MLP & \num{6} & 12h 54m 7s & 34.41 & -- \\
k-NN & \num{1} & 4d 22h 12m 20s & 16.26 & -- \\
TabNet & \num{1} & 2d 5h 2m 43s & 16.00 & 1.18 \\
NPT & \num{4} & 5d 22h 12m 7s & 37.79 & 19.18 \\
\bottomrule
\end{tabular}
}
\end{table}

We display the observed computational costs in \cref{table:protein_compute_mem_usage,table:higgs_compute_mem_usage} for the Protein and Higgs datasets.
As of now, NPTs do generally require longer training times than the non-neural baselines.
For example, for the Protein dataset, the selected hyperparameter configuration of NPT trains in 11 hours, while all boosting methods finish their runs in less than 1 hour, including the hyperparameter tuning.
The exception to this rule is given by some of the baselines, e.g., Random Forests, which do not scale well to large datasets such as Higgs.
On Higgs, the NPT run takes 5d 22h compared to 13d 13h for Random Forests.

With NPTs, we want to store as much data as possible in addition to the network weights; recall that this is done to improve the quality of the minibatch approximation of the full dataset. 
Therefore, as expected, NPT is much more GPU-memory intensive during training than TabNet, the only other baseline with a GPU-based implementation, for which maximizing minibatch size is not desirable.
In particular, the peak GPU memory usage on Higgs for NPTs is \num{19.18} GB and 
\num{1.18} GB for TabNet.
However, we note that other methods are often also memory-intensive on larger datasets.
For example, Random Forest with 1 process uses \num{189.18} GB peak CPU memory.

We next give a rough indication of prediction time behavior of NPT and the baselines.
For the same reason as above, NPT is expected to have high memory usage at prediction time.
In terms of prediction speed, we suspect that our ability to scale NPT to large batch sizes, e.g., \num{4096} on the Higgs dataset, might give us an advantage in comparison to those baselines that cannot be parallelized well and/or lack GPU support.
We leave a detailed investigation of prediction time behavior to future work.

Finally, as discussed in \S\ref{sec:discussion}, we note that by incorporating recent tools for sparse and efficient attention \cite{katharopoulos2020lineartransformer, choromanski2020performer,tay2020efftransformers,child2019sparse,beltagy2020longformer}, future research could significantly improve the scalability of NPTs.

\subsection{Extended Results for Tabular Data Benchmarks}
\label{sec:extended_results}

See \cref{table:uci_class_acc} (\cref{table:uci_class_extended}) for test accuracies (negative log-likelihood scores) on the UCI classification datasets and additionally \cref{table:uci_class_main_body} for AUROC results on the binary classification datasets.
For the regression datasets, see \cref{table:uci_reg_rmse} for RMSE scores and \cref{table:uci_reg_extended} for MSE scores.

\begin{table}[p]
    \centering
    \caption{UCI classification datasets: test accuracy. Standard error reported for datasets with multiple cross-validation splits.}
    \vspace{.5em}
    \renewrobustcmd{\bfseries}{\fontseries{b}\selectfont}
    \sisetup{detect-weight,mode=text,group-minimum-digits = 4} %
    \label{table:uci_class_acc}
\begin{tabular}{@{}l
S[detect-weight]
S[detect-weight]
S[detect-weight]
S[detect-weight]
S[detect-weight]
S[detect-weight]
@{}}
\toprule
\textit{Test Accuracy} $\uparrow$ & \text{Higgs Boson} & \text{Poker Hand} & \text{Forest Cover} &  \text{Income} & \text{Kick} & \text{Breast Cancer} \\
\midrule
Random Forest & \num{76.2} & \num{71.5} & \num{94.8} & \num{95.4} & \num{90.1} & \num{94.20 +- 0.70} \\
Gradient Boosting  & \num{76.5} & \num{94.1} & \num{96.7} & \num{95.8} & \num{90.2} & \num{94.03 +- 0.90} \\
XGBoost & \num{77.0} & \num{95.9} & \bfseries \num{97.1} & \num{95.6} & \bfseries \num{90.3} & \num{94.91 +- 0.68} \\
CatBoost & \num{76.6} & \num{99.2} & \num{95.7} & \bfseries \num{95.8} & \num{90.1} & \bfseries \num{95.61 +- 0.75} \\
LightGBM & \num{75.9} & \num{92.8} & \num{85.0} & \bfseries \num{95.8} & \bfseries \num{90.3} & \num{95.26 +- 0.82} \\
MLP & \num{78.3} & \bfseries \num{99.5} & \num{95.2} & \num{95.4} & \num{90.0} & \num{94.73 +- 0.89} \\
k-NN\footnote{Out-of-memory on the Higgs Boson dataset when attempting approximate 3-NN on an Azure D64 v3 instance with 256 GB RAM.} & \textemdash & \num{50.4} & \num{90.7} & \num{94.8} & \num{87.7} & \num{95.26 +- 0.79} \\
TabNet\footnote{TabNet had notably lower accuracy in our setup on the Poker Hand dataset (which has a fixed test set) than that the 99.2\% reported in the original work \citep{arik2020tabnet}.
We are in communication with the authors, attempting to improve these results.
However, our results on Higgs Boson match the reported performance more closely ($78.44\%$ (theirs) vs $77.1\%$ (ours)).
Further, we note that our other baselines achieve significantly better performance on the same datasets than those reported in \citep{arik2020tabnet};
e.g., our MLP achieves $99.5\%$ accuracy on Poker Hand dataset while they report $50.0\%$; our XGBoost achieves $97.1\%$ on Forest Cover while they report $89.34\%$.
However, we note that some of the datasets -- such as Forest Cover -- do not have fixed test sets.
Therefore, we cannot exclude the possibility that the performance differences are due to differently chosen train-test splits.
} & \num{77.1} & \num{53.3} & \num{94.2} & \num{95.5} & \num{89.5} & \num{94.91 +- 0.76} \\
\textbf{NPT} & \bfseries \num{80.7} & \num{99.3} & \num{96.7} & \num{95.6} & \num{90.0} & \num{94.73 +- 0.69} \\
\bottomrule
\end{tabular}
\end{table}
\begin{table}[p]
    \centering
    \vspace{.5em}
    \renewrobustcmd{\bfseries}{\fontseries{b}\selectfont}
    \sisetup{detect-weight,mode=text,group-minimum-digits = 4} %
    \caption{UCI classification datasets: negative log-likelihood (NLL). Standard error reported for datasets with multiple cross-validation splits.}
    \label{table:uci_class_extended}
\begin{tabular}{@{}l
S[detect-weight]
S[detect-weight]
S[detect-weight]
S[detect-weight]
S[detect-weight]
S[detect-weight]
@{}}
\toprule
\textit{Test NLL} $\downarrow$ & \text{Higgs Boson} & \text{Poker Hand} & \text{Forest Cover} &  \text{Income} & \text{Kick} & \text{Breast Cancer} \\
\midrule
Random Forest & \num{0.489} & \num{0.843} & \num{0.191} & \num{0.126} & \num{0.305} & \num{0.142 +- 0.012} \\
Gradient Boosting  & \num{0.477} & \num{0.379} & \num{0.109} & \num{0.111} & \num{0.296} & \num{0.185 +- 0.024} \\
XGBoost & \num{0.471} & \num{0.178} & \bfseries \num{0.080} & \num{0.147} & \bfseries \num{0.293} & \num{0.143 +- 0.025} \\
CatBoost & \num{0.476} & \num{0.065} & \num{0.120} & \bfseries \num{0.109} & \num{0.296} & \bfseries \num{0.124 +- 0.024} \\
LightGBM & \num{0.486} & \num{0.420} & \num{0.361} & \bfseries \num{0.109} & \num{0.294} & \num{0.163 +- 0.034} \\
MLP & \num{0.452} & \bfseries \num{0.028} & \num{0.131} & \num{0.118} & \num{0.333} & \num{0.545 +- 0.254} \\
k-NN\footnote{See above note on out-of-memory.} & \textemdash & \num{0.975} & \num{0.274} & \num{0.139} & \num{0.333} & \num{0.466 +- 0.167} \\
TabNet & \num{0.469} & \num{0.973} & \num{0.151} & \num{0.119} & \num{0.314} & \num{0.233 +- 0.036} \\
\textbf{NPT} & \bfseries \num{0.412} & \num{0.119} & \num{0.087} & \num{0.115} & \num{0.299} & \num{0.137 +- 0.026} \\
\bottomrule
\newline
\end{tabular}
\end{table}
\begin{table}[p]
    \centering
    \caption{UCI classification datasets: test area under the receiver operating characteristic curve (AUROC) on binary classification tasks. Standard error reported for datasets with multiple cross-validation splits.}
    \vspace{.5em}
    \renewrobustcmd{\bfseries}{\fontseries{b}\selectfont}
    \sisetup{detect-weight,mode=text,group-minimum-digits = 4} %
    \label{table:uci_class_main_body}
\begin{tabular}{@{}lS[detect-weight]
S[detect-weight]
S[detect-weight]
S[detect-weight]@{}}
\toprule
\textit{Test AUROC} $\uparrow$ & \text{Higgs Boson} &  \text{Income} & \text{Kick} & \text{Breast Cancer} \\
\midrule
Random Forest & \num{0.847} & \num{0.947} & \num{0.759} & \num{0.989 +- 0.003} \\
Gradient Boosting  & \num{0.850} & \num{0.955} & \num{0.769} & \num{0.987 +- 0.004} \\
XGBoost & \num{0.854} & \num{0.946} & \num{0.775} & \num{0.989 +- 0.003} \\
CatBoost & \num{0.851} & \bfseries \num{0.956} & \num{0.773} & \num{0.992 +- 0.003} \\
LightGBM & \num{0.843} & \bfseries \num{0.956} & \bfseries \num{0.776} & \num{0.992 +- 0.003} \\
MLP & \num{0.867} & \num{0.949} & \num{0.739} & \num{0.982 +- 0.007} \\
k-NN\footnote{See above note on out-of-memory.} & \textemdash & \num{0.932} & \num{0.747} & \num{0.980 +- 0.005} \\
TabNet & \num{0.857} & \num{0.948} & \num{0.745} & \num{0.978 +- 0.005} \\
\textbf{NPT} & \bfseries \num{0.892} & \num{0.952} & \num{0.770} & \bfseries \num{0.997 +- 0.001} \\
\bottomrule
\end{tabular}
\end{table}

\begin{table}[p]
    \centering
    \caption{UCI regression datasets: test root mean-squared error (RMSE). Standard error reported for datasets with multiple cross-validation splits.}
    \vspace{.5em}
    \renewrobustcmd{\bfseries}{\fontseries{b}\selectfont}
    \sisetup{detect-weight,mode=text,round-mode=figures,round-precision=3} %
    \label{table:uci_reg_rmse}
\begin{tabular}{@{}lrrrr@{}}
\toprule
\textit{Test RMSE} $\downarrow$ & \text{Protein} & \text{Concrete} & \text{Boston Housing} & \text{Yacht}\\
\midrule
Random Forest & \num{3.57} & \num{5.48 +- 0.18} & \num{3.78 +- 0.33} & \num{0.91 +- 0.13} \\
Gradient Boosting & \num{3.61} & \num{4.7 +- 0.18} & \num{3.44 +- 0.22} & \bfseries \num{0.85 +- 0.12} \\
XGBoost & \num{3.60} & \num{4.68 +- 0.15} & \num{3.39 +- 0.29} & \num{0.88 +- 0.13} \\
CatBoost & \num{3.51} & \bfseries \num{4.28 +- 0.16} & \num{3.44 +- 0.34} & \num{1.05 +- 0.16} \\
LightGBM & \num{3.65} & \num{4.64 +- 0.18} & \num{3.86 +- 0.27} & \num{13.6 +- 0.73}\\
MLP & \num{3.62} & \num{5.53 +- 0.2} & \num{3.32 +- 0.39} & \num{0.91 +- 0.13}\\
k-NN & \num{3.77} & \num{8.51 +- 0.3} & \num{4.27 +- 0.37} & \num{12.02 +- 0.65}\\
TabNet & \num{3.59} & \num{5.85 +- 0.15} & \num{3.88 +- 0.34} & \num{3.41 +- 1.12}\\
\textbf{NPT} & \bfseries \num{3.41} & \num{5.21 +- 0.20} & \bfseries \num{2.92 +- 0.15} & \num{1.27 +- 0.15}\\
\bottomrule
\end{tabular}
\end{table}

\begin{table}[t]
    \centering
    \caption{UCI regression datasets: test mean-squared error (MSE). Standard deviation reported for datasets with multiple cross-validation splits.}
    \vspace{.5em}
    \renewrobustcmd{\bfseries}{\fontseries{b}\selectfont}
    \sisetup{detect-weight,mode=text,round-mode=figures,round-precision=3} %
    \label{table:uci_reg_extended}
\begin{tabular}{@{}lcccc@{}}
\toprule
\textit{Test MSE} $(\pm$ \text{Std Dev)} $\downarrow$ & \text{Protein} & \text{Concrete} & \text{Boston} & \text{Yacht}\\
\midrule
Random Forest & \num{12.8} & \num{30.4 +- 6.4} & \num{15.4 +- 9.5} & \num{0.986 +- 0.818} \\
Gradient Boosting & \num{13.0} & \num{22.4 +- 5.2} & \num{12.3 +- 4.9} & \bfseries \num{0.867 +- 0.779} \\
XGBoost & \num{13.0} & \num{22.1 +- 4.2} & \num{12.3 +- 7.6} & \num{0.939 +- 0.881} \\
CatBoost & \num{12.3} & \bfseries \num{18.6 +- 4.3} & \num{13 +- 9.8} & \num{1.36 +- 1.12} \\
LightGBM & \num{13.3} & \num{21.9 +- 5.3} & \num{15.6 +- 7.6} & \num{190 +- 65.1}\\
MLP & \num{13.1} & \num{31 +- 6.9} & \num{12.6 +- 11} & \num{0.994 +- 0.937}\\
k-NN & \num{14.2} & \num{73.3 +- 16} & \num{19.6 +- 11} & \num{149 +- 52.6}\\
TabNet & \num{12.9} & \num{34.4 +- 5.8} & \num{16.2 +- 11} & \num{24.1 +- 54.3}\\
\textbf{NPT} & \bfseries \num{11.6} & \num{27.6 +- 7.6} & \bfseries \num{8.77 +- 2.6} & \num{1.8 +- 1.49}\\
\bottomrule
\end{tabular}
\end{table}

\subsection{Image Classification Results}
\label{app:image_classification}
We explore two different setups for applying NPTs to high-dimensional image data: (1)~using a CNN encoder based on the ResNet-18 architecture, followed by ABD layers, and
(2)~using a linear patching encoder that is then followed by ABD and ABA layers. 
We present results using (1) for CIFAR-10 and (2) for MNIST in the main paper, and additionally provide results using (2) on CIFAR-10 below.

Note that the aim of our image classification experiments is not to match the performance of a pretrained Transformer image classifier. 
Rather, we hope to demonstrate that NPTs can readily learn interactions between datapoints on a wide variety of data modalities and tasks, including image classification, while achieving reasonable performance.

\paragraph{(1) CNN Encoder.}
In this setup, we replace our linear encoder with a CNN, which is then folowed by several rounds of Attention Between Datapoints (ABD) on the CNN encodings. 
We apply this setup to CIFAR-10. 

In detail, we use a ResNet-18 encoder followed by 4 blocks of ABD (as we have in a default 8 layer NPT, cf.~\Cref{app:npt_base}) with 8 heads. 
Because we do not use Attention Between Attributes (ABA) the output of the encoder corresponds to the dimensions $h = d \cdot e = 128$.
We train in a supervised manner (without test inputs available at training time) with a training batch size of 128 and evaluation batch size of 480, for a fixed 100 epochs.

As reported in the main text, we achieve a test accuracy of 93.7\% on CIFAR-10 with this architecture. 
We find that the data corruption test (cf. \Cref{subsection:sample_corr}) decreases accuracy by 1.2\%, which suggests that NPT meaningfully relies on other datapoints for prediction on CIFAR-10. 
The ResNet-18 alone achieves a test accuracy of 93.9\%.

We further note that with a ResNet-18 encoder pretrained on ImageNet, our ResNet + NPT architecture achieves a test accuracy of 94.7\% on CIFAR-10, and loses 0.7\% in the data corruption experiment, whereas the pretrained ResNet-18 alone achieves a lower 94.2\% accuracy. 
We believe that an exploration of how pretraining might affect the performance of NPT and the extent to which predictions rely on other datapoints is interesting future work.

\paragraph{(2) Linear Patching Encoder.} 
We additionally consider an image classification setup using a linear patching encoder, which we apply to both MNIST and CIFAR-10.

In detail, we append the mask dimension as an extra channel and apply image patching with linear embeddings as in \citep{dosovitskiy2020image}.
Further following \citep{dosovitskiy2020image}, we use a learned position embedding for each patch and the class token.
We use $7 \times 7 = 49$ patches on MNIST and $8 \times 8 = 64$ patches on CIFAR-10.
On both datasets, for this linear patching setup, we begin with the \texttt{NPT-Base} architecture described in \ref{app:npt_base}. 
On MNIST, we use batch size 512, train for 500,000 steps, use hidden dimensions $e = 16$, $p_{\text{target}} = 0.15$, and use $7 \times 7 = 49$ patches.
On CIFAR-10, we use batch size 512, train for 1,000,000 steps, use random crops and horizontal flips for data augmentation, use $8 \times 8 = 64$ patches of each image, and do not use target masking due to constraints on compute time.

With this setup, NPT achieves $\num{98.3}\%$ accuracy on MNIST and $\num{68.2}\%$ accuracy on CIFAR-10. 
We additionally find in the data corruption experiment (detailed in \Cref{subsection:sample_corr}) that after destroying information from other datapoints, the change in accuracy is -0.4\% on MNIST and -5.1\% on CIFAR-10, demonstrating that NPTs learn to make use of interactions between images.

However, we did not find that this sufficiently demonstrated that NPTs make use of datapoint interactions in achieving \textit{reasonable} performance on CIFAR-10, and hence conducted the experiment on CIFAR-10 using the CNN encoder setup above.

We expect that the relatively low performance in the linear patching setup on CIFAR-10 was due to a number of differences between our setup and other works, which report state-of-the-art results on image classification using Transformers and linear patching.
Most importantly, previous works \citep{dosovitskiy2020image, jaegle2021perceiver} either consider only, or pretrain on, large or huge datasets; for example, ImageNet \citep{deng2009imagenet, jaegle2021perceiver, touvron2020pretrainonimagenet}, ImageNet-21k \citep{ridnik2021imagenet21k}, or JFT-300M, with over 375 million labeled datapoints \citep{sun2017jft300m, dosovitskiy2020image}. 
We perform no pretraining, and therefore a direct comparison of these results to this line of work is inappropriate.
Additionally, previous works use significantly more patches (e.g., $256$ in  \citep{dosovitskiy2020image}) and use higher resolutions, including during fine-tuning by upscaling from $32 \times 32$ to $224 \times 224$ resolution \citep{touvron2019finetuningres, kolesnikov2019bit, dosovitskiy2020image, jaegle2021perceiver}.

\section{Additional Details on the NPT Architecture}
\label{app:npt_details}

\subsection{NPT Training and Hyperparameters}
\label{app:npt_training}

\subsubsection{NPT-Base Architecture}
\label{app:npt_base}

Below, we outline the \texttt{NPT-Base} model configuration.
The final configurations used for each dataset are essentially the same as \texttt{NPT-Base}, with minor alterations in parameters such as hidden dimension size, learning rate warmup, batch size, and number of training steps.
Given our limited memory and compute time budget, these changes directly result from differences in number of datapoints/attributes between the datasets.
We divide the \texttt{NPT-Base} configuration into architectural details and optimization details.

\paragraph{\texttt{NPT-Base} Architecture}

\begin{itemize}
    \item 8 layers, alternating Attention Between Datapoints and Attention Between Attributes.
    \item 8 heads.
    \item Row-wise feed-forward (rFF) networks with one hidden layer, 4x expansion factor, and GeLU activation (standard in Transformer literature \citep{NIPS2017_transformer, narang2021modifications}).
    \item Attention weight and hidden layer dropout with $p = 0.1$ (cf. \cref{app:dropout}).
    \item Per-attribute hidden dimension $e = 64$.
\end{itemize}

\paragraph{\texttt{NPT-Base} Optimization}
\begin{itemize}
    \item LAMB \citep{you2020lamb} optimizer with $\beta = (0.9, 0.999)$ and $\epsilon = 1e-6$, and a Lookahead \citep{zhang2019lookahead} wrapper with slow update rate $\alpha = 0.5$ and $k = 6$ steps between updates.
    \item Stochastic feature masking probability $p_{\text{feature}} = 0.15$.
    \item Anneal the tradeoff $\lambda$ between feature and target loss with a cosine schedule, starting at 1 (all feature loss) to 0 (all target loss) over the course of training.
    \item Flat-then-anneal learning rate schedule: flat at the base learning rate for 70\% of steps, and then anneals following a cosine schedule to 0 by the end of training.
    \item Base learning rate 1e-3.
    \item Gradient clipping at 1.
\end{itemize}

On all datasets with minibatching, we approximately maintain relative train, validation, and test datapoint proportions in each batch. 
We train NPT in semi-supervised mode (cf. \cref{app:masking_ml_settings}) but have found that this does not consistently improve performance compared to conventional training because the amount of unlabeled test data is usually comparatively small.

\subsubsection{NPT Training on Small Data}

Here we describe the hyperparameter sweep details for small datasets -- Breast Cancer, Boston, Concrete, and Yacht.

\paragraph{Base Hyperparameter Configurations.} Across these small datasets, we make a few minor adjustments to the \texttt{NPT-Base} architecture and optimization to obtain the \texttt{NPT-Small} configuration: we increase the default number of hidden dimensions to $e = 128$, fix the flat-then-anneal schedule to be flat for 50\% instead of 70\% of steps, and train with the entire dataset as input, i.e., no minibatching.
We set stochastic target masking probability to $p_{\text{target}} = 1$ by default, i.e., deterministically mask out train labels as would be done in a normal supervised setting, and then introduce modifications in our sweep.

Note that the vast majority of hyperparameters such as the number of layers and heads, optimizer, $p_{\text{feature}}$, tradeoff annealing schedule, learning rate schedule, and gradient clipping are exactly the same between \texttt{NPT-Base} and \texttt{NPT-Small}.

We would like to keep the base configuration for each of the small datasets exactly the same.
However, we need to slightly vary the learning rate and number of epochs per dataset to optimize loss convergence across datasets. 
We use a base learning rate 5e-4 on Breast Cancer and 1e-3 on the other small datasets.
We train for \num{2000} epochs on Breast Cancer and Boston, and \num{10000} epochs on Yacht and Concrete. 
On Breast Cancer, we additionally drop $e = 32$ due to memory constraints (it has more attributes than other small datasets).

\paragraph{Small Data Sweep.}
Based on these configurations, we sweep over the following 8 configurations of the model on each dataset.\footnote{
Note that we do not search a $2^8$ grid over these modifications.
We only try out these $8$ distinct models.}

\begin{itemize}
    \item Vanilla \texttt{NPT-Small} model for given dataset.
    \item Increase number of layers $8 \rightarrow 16$.
    \item Increase number of heads $8 \rightarrow 16$.
    \item Increase number of layers $8 \rightarrow 16$, and number of heads $8 \rightarrow 16$.
    \item Stochastic target masking with probability $p_{\text{target}} = 0.1$.
    \item Stochastic target masking with probability $p_{\text{target}} = 0.5$.
    \item Increase stochastic feature masking probability from 0.15 to $p_{\text{feature}} = 0.2$.
    \item Use a cosine cyclic learning rate scheduler with two cycles, initial learning rate 1e-7, final learning rate 1e-7, and max learning rate given by the base model learning rate.
\end{itemize}

For the stochastic target masking variants, we proportionally increase the number of epochs (e.g., with $p_{\text{target}} = 0.5$, half as many targets are observed in a given epoch, so we double the total number of epochs).

\paragraph{Small Data Variant Rank Orders.}

\begin{table}[t]
    \centering
    \caption{
    Average rank order of variants of \texttt{NPT-Small} $(\pm\ \text{standard error})$ across 10 cross-validation splits on each small dataset.
    We determine rank using negative log-likelihood and sort methods by ascending rank for each metric.
    }
    \sisetup{detect-weight,mode=text,group-minimum-digits = 4} %
    \label{table:npt_small_rank_order}
\resizebox{0.45\textwidth}{!}{%
\begin{tabular}{
@{}>{
\columncolor{white!70}[0pt][\tabcolsep]}Nl
>{\columncolor{white!70}[\tabcolsep][0pt]}Or @{}}
\toprule
\textit{Dataset} & \text{Boston} \\
\midrule
$p_{\text{target}} = 0.5$ & \num{2.50 +- 0.73} \\
$p_{\text{target}} = 0.1$ & \num{2.50 +- 0.83} \\
\num{8} $\rightarrow\ $ \num{16} \text{Layers}, \num{8} $\rightarrow\ $ \num{16} \text{Heads} & \num{2.60 +- 0.65} \\
\text{Cosine Cyclic LR Schedule} & \num{3.10 +- 0.75} \\
\text{Base} \texttt{NPT-Small} & \num{3.70 +- 0.84} \\
\num{8} $\rightarrow\ $ \num{16} \text{Layers} & \num{4.3 +- 0.67} \\
\num{8} $\rightarrow\ $ \num{16} \text{Heads} & \num{4.40 +- 0.60} \\
$p_{\text{feature}} = 0.2$ & \num{4.90 +- 0.46} \\
\bottomrule
\end{tabular}
}
\hspace{5pt}
\setlength{\tabcolsep}{3pt}
\resizebox{0.45\textwidth}{!}{%
\begin{tabular}{
@{}>{
\columncolor{white!70}[0pt][\tabcolsep]}Nl
>{\columncolor{white!70}[\tabcolsep][0pt]}Or @{}}
\toprule
\textit{Dataset} & \text{Breast Cancer} \\
\midrule
$p_{\text{target}} = 0.1$ & \num{2.60 +- 0.92} \\
\text{Base} \texttt{NPT-Small} & \num{2.70 +- 0.65} \\
\num{8} $\rightarrow\ $ \num{16} \text{Heads} & \num{3.00 +- 0.49} \\
$p_{\text{feature}} = 0.2$ & \num{3.20 +- 0.68} \\
$p_{\text{target}} = 0.5$ & \num{3.50 +- 0.56} \\
\text{Cosine Cyclic LR Schedule} & \num{4.10 +- 0.89} \\
\num{8} $\rightarrow\ $ \num{16} \text{Layers}, \num{8} $\rightarrow\ $ \num{16} \text{Heads} & \num{4.40 +- 0.70} \\
\num{8} $\rightarrow\ $ \num{16} \text{Layers} & \num{4.50 +- 0.81} \\
\bottomrule
\end{tabular}
}
\\
\vspace{5pt}
\resizebox{0.45\textwidth}{!}{%
\begin{tabular}{
@{}>{
\columncolor{white!70}[0pt][\tabcolsep]}Nl
>{\columncolor{white!70}[\tabcolsep][0pt]}Or @{}}
\toprule
\textit{Dataset} & \text{Concrete} \\
\midrule
$p_{\text{target}} = 0.5$ & \num{2.30 +- 0.76}\\
\text{Cosine Cyclic LR Schedule} & \num{2.50 +- 0.69} \\
\num{8} $\rightarrow$ \num{16} \text{Heads} & \num{2.60 +- 0.62} \\
\text{Base} \texttt{NPT-Small} & \num{2.70 +- 0.52} \\
$p_{\text{target}} = 0.1$ & \num{3.10 +- 0.64} \\
\num{8} $\rightarrow$ \num{16} \text{Layers} & \num{3.90 +- 0.80} \\
\num{8} $\rightarrow$ \num{16} \text{Layers}, \num{8} $\rightarrow\ $ \num{16} \text{Heads} & \num{5.10 +- 0.66} \\
$p_{\text{feature}} = 0.2$ & \num{5.8 +- 0.39} \\
\bottomrule
\end{tabular}
}
\hspace{5pt}
\setlength{\tabcolsep}{3pt}
\resizebox{0.45\textwidth}{!}{%
\begin{tabular}{
@{}>{
\columncolor{white!70}[0pt][\tabcolsep]}Nl
>{\columncolor{white!70}[\tabcolsep][0pt]}Or @{}}
\toprule
\textit{Dataset} & \text{Yacht} \\
\midrule
$p_{\text{target}} = 0.1$ & \num{1.20 +- 0.53} \\
$p_{\text{target}} = 0.5$ & \num{2.70 +- 0.52} \\
\num{8} $\rightarrow\ $ \num{16} \text{Heads} & \num{2.80 +- 0.66} \\
\text{Cosine Cyclic LR Schedule} & \num{3.10 +- 0.69} \\
\text{Base} \texttt{NPT-Small} & \num{3.60 +- 0.54} \\
\num{8} $\rightarrow$ \num{16} \text{Layers} & \num{4.10 +- 0.74} \\
$p_{\text{feature}} = 0.2$ & \num{5.20 +- 0.47} \\
\num{8} $\rightarrow$ \num{16} \text{Layers}, \num{8} $\rightarrow\ $ \num{16} \text{Heads} & \num{5.30 +- 0.83} \\
\bottomrule
\end{tabular}
}
\vspace{-7pt}
\end{table}

We report the rank order ($\pm$ standard error) of these variants in \cref{table:npt_small_rank_order}.
A notable trend is that the \emph{target masking configurations perform particularly well}. 
One of the two configurations with target masking is the top performer on each of the four datasets.
This could be attributed to some combination of the representational advantage of label masking (cf. \cref{subsection:masking}), an additional regularization effect akin to dropout, or stabler convergence over a greater number of epochs.

Other configurations did not display similarly obvious trends in performance. This is in concordance with the ablation study (\cref{appendix:ablation_study}) and supports the claim that NPT is robust to changes in hyperparameters.

\subsubsection{NPT Training on Medium and Large Data}
\label{app:med_large_npt_tuning}

For the medium and large datasets, we again adopt the \texttt{NPT-Base} architecture and optimization hyperparameters, and make minor manual changes on a per-dataset basis to account for differences in number of datapoints and attributes across the datasets. 
No more than \num{3} manual iterations are performed to find these adaptations.
We generally attempt to maximize batch size given a fixed memory budget.
Given the rank order results on small data (cf.~\cref{table:npt_small_rank_order}) we use target masking on the medium and large datasets whenever computationally feasible.\footnote{Training is slower as only a $p_{\text{target}}$ proportion of training labels are used for backpropagation in each epoch.
Therefore, target masking may increase training time beyond our budget.
}.
These per-dataset alterations are reported below.

\paragraph{UCI Datasets.}
We report results for Protein using the Base NPT configuration in the ablation study (cf. \cref{table:ablation}).
On Kick, we use batch size \num{4096}, train for \num{250000} steps, and use $p_{\text{target}} = 0.5$.
On Income, we use batch size \num{2048}, train for \num{2000000} steps, use no feature masking (and correspondingly fix the tradeoff parameter $\lambda = 0$), and use $p_{\text{target}} = 0.15$.
On Poker Hand, we use batch size \num{4096}, train for \num{200000} steps, use $p_{\text{target}} = 0.5$, and stratify by class (i.e., compose training datapoints in each minibatch proportionally to the empirical label distribution of the training set to account for significant class imbalance).
On Forest Cover, we use batch size \num{1800}, train for \num{800000} steps, use a polynomial decay learning rate scheduler with warmup over the first $1\%$ of steps, use base learning rate \num{0.005}, $p_{\text{target}} = 0.5$, and class balancing as above.
The changes to learning rate scheduling were made to speed up training and hence save compute resources.
On Higgs, we use batch size \num{4096}, train for \num{500000} steps, and do not use target masking due to constraints on compute time.

\paragraph{Image Data (CIFAR-10 and MNIST).} 
See \cref{app:image_classification} for details on the image data architecture and setup.

Again, we stress that the vast majority of hyperparameters used on all datasets (small, medium, and large benchmarks from UCI as well as the image benchmarks) are identical; configurations follow \texttt{NPT-Base} (cf. \cref{app:npt_base}) very closely and changes usually affect NPT optimization rather than architecture.

\subsection{Further Details on ABD and ABA Layers}
\subsubsection{Dropout}
\label{app:dropout}
In practice, we apply elementwise dropout on the attention scores $\text{exp}(\Qb \Kb^{\top} / \sqrt{h})$, as well as on the input/output embeddings and the output of the $\text{MHSelfAtt}(\cdot)$ function (often referred to as attention and hidden dropout).

\subsection{Input and Output Embdedings}
\label{app:embedding}
\subsubsection{Input Embedding} \label{app:input_embedding}

At a high-level, we embed inputs by encoding categorical attributes as one-hot vectors and standardizing continuous attributes, followed by a learned linear embedding for each attribute to obtain $\text{InputEmbed}(\Xb) = \Hb^{(0)} \in \mathbb{R}^{n \times d \times e}$. 

More specifically, we perform the following sequence of steps:
Attributes $\Xb_{:, j}, j \in \{1, \dots, d\}$ of the input matrix can be either continuous or categorical. 
We first apply a function $\text{Encode}(\cdot)$ to each attribute $\Xb_{:, j}$. This ``encodes''  categorical attributes with a one-hot representation and standardizes continuous attributes to zero mean and unit standard deviation.
Each encoded attribute $j$ has (potentially unique) dimensions $n \times e_j$. 
Then, we concatenate this encoded attribute with its respective column of the masking matrix $\calbm{M}_{:, j}$ along the second dimension to produce a column encoding of dimensions $n \times (e_j + 1)$.
We learn separate embedding weights for each attribute $\Wb^{\text{in}}_j \in \mathbb{R}^{(e_j + 1) \times e}$ that embed all attributes to a common hidden dimension $e$.
Altogether, we can state the embedding of a single attribute column $\Xb_{:, j}$ as
\begin{align}
\Hb^{(0)}_{:, j} = \underset{\text{axis}=e}{\text{concat}}(\text{Encode}(\Xb_{:, j}), \calbm{M}_{:, j}) \Wb^{\text{in}}_j + \Hb^{\text{Index}}_{:, j} + \Hb^{\text{Type}}_{:, j},
\label{eq:input_embed_col}
\end{align}
where $\Hb^{\text{Index}}_{:, j} \in \mathbb{R}^{n \times e}$ is a learnt embedding for the index and $\Hb^{\text{Type}}_{:, j} \in \mathbb{R}^{n \times e}$ for the type (either continuous or categorical) of attribute $j$.

Finally, we write the full NPT input embedding layer as
\begin{align}
\text{InputEmbed}(\Xb) = \underset{\text{axis}=d}{\text{stack}}(\Hb^{(0)}_{:, 1}, \dots, \Hb^{(0)}_{:, d}) = \Hb^{(0)} \in \mathbb{R}^{n \times d \times e}.
\label{eq:input_embed_full}
\end{align}
The stack operation constructs $\Hb^{(0)} \in \mathbb{R}^{n \times d \times e}$ from $d$ attribute embeddings $\Hb^{(0)}_{:, j} \in \mathbb{R}^{n \times e}, j \in \{1, \dots d\}$. 

\subsubsection{Output Embedding} \label{app:output_embedding}
For an NPT with $L$ layers, we obtain an output prediction by applying a learnt linear output embedding (that closely mirrors the process of the input embedding) to the output of the last attention layer $\Hb^{(L)}$.
We write the output embedding layer as
\begin{align}
\text{OutputEmbed}(\Hb^{(L)}) = [\Zb_{:, 1}, \dots, \Zb_{:, d}] = \Zb,
\label{eq:output_embed_1}
\end{align}
\begin{align}
\text{where\ } \Zb_{:, j} = \Hb^{(L)}_{:, j, :} \Wb^{\text{out}}_j.
\label{eq:output_embed_2}
\end{align}
Our prediction $\Zb$ is a list of $d$ attribute predictions $\Zb_j \in \mathbb{R}^{n \times e_j}$.
We learn output embedding weights $\Wb^{\text{out}}_j \in \mathbb{R}^{e \times e_j}$ which are applied on attribute slices $\Hb^{(L)}_{:, j, :} \in \mathbb{R}^{n \times e}$ of the output of the $L$th layer $\Hb^{(L)} \in \mathbb{R}^{n \times d \times e}$.
Note that the second dimension of each attribute prediction $\Zb_j$ is determined by the encoding size (i.e., $e_j = 1$ for continuous attributes, $e_j$ is the number of categories for a categorical attribute) as in the input embedding. 
Note also that we do not predict a mask value (i.e., we do not predict to dimensions $n \times (e_j + 1)$ for each attribute).
To obtain the final prediction matrix $\hat{\Xb} \in \mathbb{R}^{n \times d}$ we take the $\arg\max$ over the categorical predictions.

\subsection{NPT Masking}
\label{app:masking_settings}
\subsubsection{Handling Missing Values} \label{app:handling_missing_values}
Real-world data -- particularly tabular data -- often contains \emph{missing entries}.
Many popular models for supervised prediction on tabular data cannot accommodate missing values as input.
Instead they require that missing features are \emph{imputed}, i.e., an additional model predicts a surrogate value for what the missing values could have been, such that the supervised model then receives a ``clean'' dataset as input which no longer overtly contains missing values.

For example, all scikit-learn \citep{scikit-learn} predictors, including Gradient Boosting and Random Forests, require an explicit imputation step before training.
Often, extremely simple imputation methods are used in practice.
For example, TabNet \citep{arik2020tabnet} drops datapoints with >10\% missing entries and otherwise applies univariate mean imputation as part of a Google AI Platform pipeline \citep{cloudai2021tabnet}; and CatBoost \citep{catboost2017} treats a missing continuous entry as the minimum or maximum of that feature (univariate min/max imputation), or raises an error. 
While more complex imputation methods could in theory be applied as pre-processing \citep{buuren2011mice, honaker2001amelia, honaker2010amelia, stekhoven2012missforest, su2012mi}, there will always remain a separation between the imputation step and the prediction model.
Additionally, more complex imputation methods often require training and hyperparameter selection, such that the combined imputation and prediction process becomes cumbersome.
Both for practical as well as performance reasons, it is desirable to have a single model that can \emph{directly} handle missing data, learn complex internal imputation operations from the data, and at the same time learn the desired predictive function from features to target.

This is exactly what NPTs achieve.
They are able to accommodate inputs with missing values gracefully without requiring any imputation pre-processing steps, therefore modeling data with missing values end-to-end.
We can explicitly indicate that a value $\Xb_{i,j}$ is missing by simply setting the mask token $\calbm{M}_{i,j}=1$.
Already in standard NPTs, the stochastic feature masking during training teaches NPTs to predict values for which $\calbm{M}_{i,j}=1$ while ignoring the value of their entry  $\Xb_{i,j}$ at input.
Further, no choice of fixed imputation algorithm has to be made with NPTs.
Instead, NPTs learn directly from the data how to make predictions given missing values.
Attention between datapoints might be particularly useful for learning a general mechanism of how to impute missing values by attending to other datapoints.
We therefore suspect that NPTs could be a strong contender for predicting on data with missing values.
Further, unlike common imputation pre-processing, NPTs do not discard the information of \emph{which} attributes were missing.
Future work could also explore the ability of NPT to model arbitrary correlations underlying the pattern of which data is missing, i.e., datasets where values are not missing at random.

\subsubsection{Masking Encompasses Many Common Machine Learning Settings}
\label{app:masking_ml_settings}
The flexible masking mechanism of NPTs can be used to accommodate a variety of common machine learning settings.

\textbf{Multi-Target Prediction.}
In \emph{multi}-target classification or regression, more than one column of the dataset contains targets.
Standard supervised models often do not support multi-output settings and must resort to training multiple models, one for each target.
NPTs can accommodate multi-target prediction trivially, since they learn to make predictions at any masked input entry.
For prediction in a multi-target setting, we simply apply target masking on all columns with targets.

\textbf{Self-Supervision.}
In self-supervised learning, we are often interested in learning a generative model or useful encoding from unlabeled data.
The reconstruction of corrupted input features as part of stochastic feature masking can already be seen as self-supervised learning.
The stochastic masking mechanism allows NPTs to learn to predict masked out values anywhere in the input.
In theory, NPTs should be able to learn a fully generative model of the dataset in this manner.

\textbf{Semi-Supervision.}
In semi-supervised learning, we hope to use large quantities of unlabeled data to aid in learning a predictive function on a small set of labeled data.
Often, this involves a two-step process, such as learning a powerful autoencoder from all data and then training a predictor using the learnt encoder and the small set of labeled data.
NPTs can accommodate semi-supervised learning without changes to the architecture.
Specifically, we can include large amounts of unlabeled data by simply appending those feature values to the labeled input dataset.
We indicate that no labels are available for all unlabeled datapoints $i^\prime$ by setting their mask token at the target column $\Xb_{i^\prime, d}=1$.
NPTs can use attention between datapoints to make use of information from the features of the unlabeled datapoints.

\textbf{Imputation.}
With imputation, we refer to scenarios where the main task is to predict missing values for arbitrary attributes and datapoints.
Similar to self-supervision, NPTs already learn how to do this from the stochastic masking mechanism that is enabled by default.
(Unlike for the self-supervision category, the imputation scenario assumes that there are actually some missing values that we would like to predict.)

\subsubsection{Stochastic Masking: Details} \label{app:stochastic_masking}

For stochastic masking, a specified proportion of training entries (we default to 15\% following \citep{devlin2019bert}) are selected for masking at the start of each epoch.
Among those entries chosen, we mask out the value with 90\% probability and randomize it with 10\% probability.
``Masking out'' means that the original value $\Xb_{i,j}$ is overwritten with zeros and the mask token is set to 1.
Randomization is done for categorical targets by sampling a new class uniformly at random.
Continuous targets are sampled from a standard Normal $\mathcal{N}(0, 1)$.

This sampling scheme is applied for both stochastic feature masking and stochastic target masking, where we allow for different masking proportions between the two ($p_{\text{feature}}$ and $p_{\text{target}}$).
During training, a loss is backpropagated on the masked entries.

\subsection{NPT Optimization} \label{app:optimization}

Each of the losses $\mathcal{L}^{\text{Features}}$ (feature loss) and $\mathcal{L}^{\text{Targets}}$ (target loss) is normalized by the number of entries on which it is evaluated.

As described in \cref{app:npt_base}: we anneal the $\lambda$ parameter in the NPT objective using a cosine schedule, i.e., starting with full weight on the feature loss term at epoch 0 and annealing to full weight on the target loss term by the end of training.
We use LAMB \citep{you2020lamb} with Lookahead \citep{zhang2019lookahead} for optimization, which we find to perform well with large minibatches. 
We use a flat-then-anneal learning rate schedule with cosine decay, notable as Transformer works \citep{NIPS2017_transformer, devlin2019bert} often report that a linear learning rate warmup is necessary for training stability.
Our placement of Layer Normalization before self-attention (``pre-LayerNorm'' \citep{ba2016ln, chen2018preln}) may contribute to our not needing this.

\section{Related Work -- Continued} \label{app:related_work}

\subsection{Tree-Based Baselines} \label{app:tree_baseline_background}
Tree-based approaches in machine learning have been popular for over half a century \citep{loh2014fifty,morgan1963problems,breiman1984classification}.
Each node of a tree splits the data into smaller subsets, and predictions are made at each of the leaves.
The splits are learned from a set of training data by minimizing some objective function.
Many established methods combine predictions of multiple trees through bagging \citep{breiman1996bagging} and/or boosting \citep{schapire1990strength}.
Bagging uses an ensemble of trees, each learned by training on a random subsample of the data. This approach is most popularly used in Random Forests~\citep{breiman2001random}.
Boosting learns a sequence of trees, conditioning the learning of each additional model on the predictions of previous models, with the aim of reducing overall prediction error.

Popular examples of tree-based boosting models include AdaBoost \citep{freund1997decision}, XGBoost \citep{xgboost2016}, CatBoost \citep{catboost2017}, and LightGBM \citep{ke2017lightgbm}.
To date, boosting arguably comprises the most popular approach for tabular data prediction.
These models often rely on careful tuning of a large variety of hyperparameters.
However, training cost is often cheap compared to neural network architectures, and therefore, so is hyperparameter optimization.
This balance is slightly offset for NPTs, which seem largely robust to hyperparameter tuning.
Hence, the training of a single NPT is often competitive to a grid search over hyperparameters for a tree-based model.

\section{Classification and Regression Benchmark Details}
\label{app:uci_benchmarks}
\FloatBarrier

\subsection{General Setup} \label{appendix:setup_details}
For certain datasets we use a canonical fixed test set.
Otherwise, we default to 10-fold cross validation with 0.7/0.2/0.1 splits on smaller datasets and a single 0.7/0.1/0.2 split on larger datasets, where the exact split indices are always consistent across baselines.
The full details on all UCI benchmark datasets are given in \cref{table:class_data_stats,table:reg_data_stats}.
Note the variety of the datasets across number of instances, number of features, composition (categorical or continuous) of features, and task (multi-class classification, binary classification, and regression).

\begin{table}[p]
    \centering
    \caption{UCI classification dataset statistics and experimental setup details.}
    \vspace{.5em}
    \sisetup{group-separator={,}} %
    \label{table:class_data_stats}
    \resizebox{1\textwidth}{!}{
\begin{tabular}{@{}lcccccc@{}}
\toprule
\textit{Dataset} & \text{Higgs Boson} & \text{Poker Hand} & \text{Forest Cover} &  \text{Income} & \text{Kick} & \text{Breast Cancer} \\
\midrule
\text{\# Instances} & \num{11000000} & \num{1025010} & \num{581012} & \num{299285} & \num{72983} & \num{569} \\
\text{\# Features}  & \num{28} & \num{10} & \num{54} & \num{42} & \num{32} & \num{31} \\
\text{\# Categorical Features}  & \num{0} & \num{10} & \num{44} & \num{36} & \num{18} & \num{0} \\
\text{\# Continuous Features}  & \num{28} & \num{0} & \num{10} & \num{6} & \num{14} & \num{31} \\
\text{\# Classes} & \num{2} & \num{10} & \num{7} & \num{2} & \bfseries \num{2} & \num{2} \\
\text{Train/Val/Test Split} & \num{0.84}/\num{0.12}/\num{0.05} & \num{0.017}/\num{0.003}/\num{0.98} & \num{0.7}/\num{0.1}/\num{0.2} & \num{0.57}/\num{0.1}/\num{0.33} & \num{0.7}/\num{0.1}/\num{0.2} & \num{0.7}/\num{0.2}/\num{0.1} \\
\text{Fixed Test Set} & \text{Yes} & \text{Yes} & \text{No} & \text{Yes} & \text{No} & \text{No (10-Fold CV)} \\
\text{Uses Minibatching} & \text{Yes} & \text{Yes} & \text{Yes} & \text{Yes} & \text{Yes} & \text{No} \\
\bottomrule
\end{tabular}}
\end{table}

\begin{table}[p]
    \centering
    \caption{UCI regression dataset statistics and experimental setup details.}
    \vspace{.5em}
    \sisetup{group-separator={,}} %
    \label{table:reg_data_stats}
    \resizebox{1\textwidth}{!}{
\begin{tabular}{@{}lcccc@{}}
\toprule
\textit{Dataset} & \text{Protein} & \text{Concrete} & \text{Boston} & \text{Yacht}\\
\midrule
\text{\# Instances} & \num{45730} & \num{1030} & \num{506} & \num{308} \\
\text{\# Features}  & \num{9} & \num{9} & \num{13} & \num{6} \\
\text{\# Categorical Features}  & \num{0} & \num{0} & \num{2} & \num{5} \\
\text{\# Continuous Features}  & \num{9} & \num{9} & \num{11} & \num{1} \\
\text{Train/Val/Test Split} & \num{0.7}/\num{0.1}/\num{0.2} & \num{0.7}/\num{0.2}/\num{0.1} & \num{0.7}/\num{0.2}/\num{0.1} & \num{0.7}/\num{0.2}/\num{0.1} \\
\text{Fixed Test Set} & \text{No} & \text{No (10-Fold CV)} & \text{No (10-Fold CV)} & \text{No (10-Fold CV)} \\
\text{Uses Minibatching} & \text{Yes} & \text{No} & \text{No} & \text{No} \\
\bottomrule
\end{tabular}}
\end{table}

\subsection{Hyperparameter Tuning}
\label{sec:app_hyper_tuning}
\subsubsection{Overview}

\begin{table}[p]
    \centering
    \caption{
        Number of unique hyperparameter configurations swept over for each model class and dataset. Here we shorten Boston Housing to BH, Breast Cancer to BC, Poker Hand to PH, Forest Cover to FC, and Higgs Boson to HB. Datasets are ordered by increasing number of datapoints ($n$) from left to right.\\
        * TabNet on Protein, Kick, and Income is tuned by sweeping over all 6 configurations listed in the original paper \citep{arik2020tabnet} in addition to the default configuration.
        Note that these configs include one tuned on Income.\\
        $\dagger$\ TabNet on Poker Hand, Forest Cover, and Higgs Boson use precisely the configuration specified for those datasets in the original paper \citep{arik2020tabnet}.
        \\
        $\ddagger$\ For some of these, we manually optimized convergence of the validation loss by adjusting non-architectural parameters such as learning rate (schedule), batch size, or number of steps in at most \num{3} iterations.
        See \ref{app:med_large_npt_tuning}.
        }
    \vspace{.5em}
    \renewrobustcmd{\bfseries}{\fontseries{b}\selectfont}
    \sisetup{detect-weight,mode=text,group-separator = {,}}
    \label{table:hyper_config_count}
\resizebox{1\textwidth}{!}{%
\begin{tabular}{@{}lcccccccccc@{}}
\toprule
\textit{Dataset} & Yacht & BH & BC & Concrete & Protein & Kick & Income & PH & FC & HB \\
\midrule
Random Forest & \num{24} & \num{24} & \num{24} & \num{24} & \num{24} & \num{24} & \num{24} & \num{24} & \num{24} & \num{24} \\
Gradient Boosting & \num{48} & \num{48} & \num{48} & \num{48} & \num{48} & \num{48} & \num{48} & \num{48} & \num{48} & \num{48} \\
XGBoost & \num{48} & \num{48} & \num{48} & \num{48} & \num{48} & \num{48} & \num{48} & \num{48} & \num{48} & \num{48} \\
CatBoost & \num{48} & \num{48} & \num{48} & \num{48} & \num{48} & \num{48} & \num{48} & \num{48} & \num{48} & \num{48} \\
LightGBM & \num{48} & \num{48} & \num{48} & \num{48} & \num{48} & \num{48} & \num{48} & \num{48} & \num{48} & \num{48} \\
MLP & \num{11340} & \num{11340} & \num{11340} & \num{11340} & \num{270} & \num{270} & \num{270} & \num{270} & \num{270} & \num{6} \\
k-NN & \num{480} & \num{480} & \num{480} & \num{480} & \num{40} & \num{40} & \num{40} & \num{40} & \num{40} & -\\
TabNet & \num{48} & \num{48} & \num{48} & \num{48} & \num{7}* & \num{7}* & \num{7}* & $\num{1}^{\dagger}$ & $\num{1}^{\dagger}$ & $\num{1}^{\dagger}$ \\
NPT & \num{8} & \num{8} & \num{8} & \num{8} & $\num{1}^{\ddagger}$ & $\num{1}^{\ddagger}$ & $\num{1}^{\ddagger}$ & $\num{1}^{\ddagger}$ & $\num{1}^{\ddagger}$ & $\num{1}^{\ddagger}$ \\
\bottomrule
\end{tabular}}
\end{table}

Table \ref{table:hyper_config_count} lists the number of unique hyperparameter configurations swept over for each baseline and classification/regression dataset.

All details on the NPT hyperparameter setup are given in \cref{app:npt_training}.
Note that for any given dataset, NPT is tuned over fewer configurations than the baselines:
we fix a base model configuration with minimal data-dependent tuning of hyperparameters such as learning rate, scheduler, number of steps, and target masking percentage $p_{\text{feature}}$, and choose the largest batch size viable for our hardware.
On small datasets, we then sweep over 8 variants, and on medium and large datasets (including image data) use only the fixed variant with minor modifications.

In the case of TabNet, the configurations used for Poker Hand, Forest Cover, and Higgs Boson are those reported by the original authors for these datasets \citep{arik2020tabnet}; for Income, we performed a sweep over configurations including one reported for that dataset in the original publication.
All deep learning approaches (MLP, TabNet, and NPT) use early stopping on the validation target loss.

\subsubsection{Baseline Sweep Details}

We report hyperparameter sweep details for baselines below. 
The associated tables for each baseline give the bounds of the search space for numerical hyperparameters and all values for categorical hyperparameters. 
We clarify specific hyperparameters and provide context where helpful.

\paragraph{Random Forest (Tables \ref{table:rf_class_hypers}, \ref{table:rf_reg_hypers}).}  
\texttt{criterion} refers to the split criterion. \texttt{max\_features} is the number of features to consider when looking for the best split.

\paragraph{Gradient Boosting, XGBoost, LightGBM, and CatBoost (Table \ref{table:tree_hypers}).} 
See \ref{app:tree_baseline_background} for background on tree-based baselines.

\paragraph{MLP (Tables \ref{table:mlp_small_hypers}, \ref{table:mlp_med_large_hypers}, \ref{table:mlp_higgs_hypers}).}
The invscaling \texttt{learning\_rate} scheduler scales with $\alpha_t = \alpha_0 / t^{0.5}$ where $t$ is the step, $\alpha_0$ the initial learning rate, and $\alpha_t$ the learning rate at step $t$.
The adaptive \texttt{learning\_rate} divides the current learning rate by 5 when two consecutive epochs fail to decrease training or validation log loss by a tolerance 1e-4. 
Due to compute constraints, we decreased the size of the search space as the dataset size increased by focusing on 3-layer networks, lower L2 penalties, and higher batch sizes.

\paragraph{k-NN (Tables \ref{table:knn_small_hypers}, \ref{table:knn_med_large_hypers}, \ref{table:knn_large_hypers}).} 
\texttt{weights} describes the weight function applied to the neighborhood, i.e., ``distance'' means that closer neighbors of a query point have greater influence than those further away. 
\texttt{algorithm} specifies the underlying k-NN algorithm, where KD Tree \citep{bentley1975kdtree} and Ball Tree \citep{liu2006balltree} are approximations of brute-force search. 
The ``auto'' setting determines an appropriate algorithm based on the input data \citep{scikit-learn}.
\texttt{leaf\_size} is a hyperparameter of KD Tree and Ball Tree. 
\texttt{p} is the power parameter for the distance metric, i.e., $\texttt{p}=1$ yields Manhattan and $\texttt{p}=2$ Euclidean distance.
It was computationally infeasible for us to obtain reasonable results on the 11M instance Higgs Boson dataset.
Even when attempting approximate 3-NN on an Azure D64 v3 instance with 256 GB RAM, we encountered an out-of-memory error.

\begin{table}[p]
    \centering
    \caption{
    Random Forest classification hyperparameters.
    }
    \vspace{.5em}
    \renewrobustcmd{\bfseries}{\fontseries{b}\selectfont}
    \sisetup{detect-weight,mode=text}
    \label{table:rf_class_hypers}
\begin{tabular}{@{}lccc@{}}
\toprule
\textit{Hyperparameter} & \texttt{criterion}
    & \texttt{n\_estimators} & \texttt{max\_features} \\
\midrule
Setting & \text{gini, entropy} & \lbrack\text{50, 1000}\rbrack & \text{auto, sqrt, log2} \\
\bottomrule
\end{tabular}
\end{table}

\begin{table}[p]
    \centering
    \caption{
    Random Forest regression hyperparameters.
    }
    \vspace{.5em}
    \renewrobustcmd{\bfseries}{\fontseries{b}\selectfont}
    \sisetup{detect-weight,mode=text}
    \label{table:rf_reg_hypers}
\begin{tabular}{@{}lccc@{}}
\toprule
\textit{Hyperparameter} & \texttt{criterion}
    & \texttt{n\_estimators} & \texttt{max\_features} \\
\midrule
Setting & \text{mae, mse} & \lbrack\text{50, 1000}\rbrack & \text{auto, sqrt, log2} \\
\bottomrule
\end{tabular}
\end{table}

\begin{table}[p]
    \centering
    \caption{
    Gradient Boosting, XGBoost, LightGBM, and CatBoost hyperparameters (for both regression and classification).
    }
    \vspace{.5em}
    \renewrobustcmd{\bfseries}{\fontseries{b}\selectfont}
    \sisetup{detect-weight,mode=text}
    \label{table:tree_hypers}
\begin{tabular}{@{}lccc@{}}
\toprule
\textit{Hyperparameter} & \texttt{learning\_rate}
    & \texttt{max\_depth} & \texttt{n\_estimators} \\
\midrule
Setting & \lbrack\text{1e-3, 0.3}\rbrack & \lbrack\text{3, 10}\rbrack & \lbrack\text{50, 1000}\rbrack \\
\bottomrule
\end{tabular}
\end{table}

\begin{table}[p]
    \centering
    \caption{
    MLP hyperparameters for small datasets (Boston Housing, Breast Cancer, Concrete, and Yacht).
    }
    \vspace{.5em}
    \renewrobustcmd{\bfseries}{\fontseries{b}\selectfont}
    \sisetup{detect-weight,mode=text}
    \label{table:mlp_small_hypers}
\resizebox{1\textwidth}{!}{%
\begin{tabular}{@{}lcc@{}}
\toprule
\textit{Hyperparameter} & \texttt{hidden\_layer\_sizes}
    & \texttt{l2\_penalty} \\
\midrule
Setting & \lbrack\text{(25)-(500), (25,25)-(500,500), (25,25,25)-(500,500,500)}\rbrack & \lbrack\text{0, 1}\rbrack \\
\end{tabular}
}
\resizebox{1\textwidth}{!}{%
\begin{tabular}{@{}lccc@{}}
\toprule
\textit{Hyperparameter} & \texttt{batch\_size} & \texttt{learning\_rate} & \texttt{learning\_rate\_init} \\
\midrule
Setting & \lbrack\text{32, 256}\rbrack & \text{constant, invscaling, adaptive} & \lbrack\text{1e-5, 1e-1}\rbrack \\
\bottomrule
\end{tabular}
}
\end{table}

\begin{table}[p]
    \centering
    \caption{
        MLP hyperparameters for medium and large datasets other than Higgs Boson (Protein, Kick, Income, Poker Hand, Forest Cover).}
    \vspace{.5em}
    \renewrobustcmd{\bfseries}{\fontseries{b}\selectfont}
    \sisetup{detect-weight,mode=text}
    \label{table:mlp_med_large_hypers}
\begin{tabular}{@{}lcc@{}}
\toprule
\textit{Hyperparameter} & \texttt{hidden\_layer\_sizes}
    & \texttt{l2\_penalty} \\
\midrule
Setting & \lbrack\text{(25,25,25)-(500,500,500)}\rbrack & \lbrack\text{0, 1e-2}\rbrack \\
\end{tabular}
\begin{tabular}{@{}lccc@{}}
\toprule
\textit{Hyperparameter} & \texttt{batch\_size} & \texttt{learning\_rate} & \texttt{learning\_rate\_init} \\
\midrule
Setting & \lbrack\text{128, 256}\rbrack & \text{constant, invscaling, adaptive} & \lbrack\text{1e-5, 1e-1}\rbrack \\
\bottomrule
\end{tabular}
\end{table}

\begin{table}[p]
    \centering
    \caption{
        MLP hyperparameters for the Higgs Boson dataset.}
    \vspace{.5em}
    \renewrobustcmd{\bfseries}{\fontseries{b}\selectfont}
    \sisetup{detect-weight,mode=text}
    \label{table:mlp_higgs_hypers}
\begin{tabular}{@{}lcc@{}}
\toprule
\textit{Hyperparameter} & \texttt{hidden\_layer\_sizes}
    & \texttt{l2\_penalty} \\
\midrule
Setting & \text{(500,500,500)} & \text{0} \\
\end{tabular}
\begin{tabular}{@{}lccc@{}}
\toprule
\textit{Hyperparameter} & \texttt{batch\_size} & \texttt{learning\_rate} & \texttt{learning\_rate\_init} \\
\midrule
Setting & \lbrack\text{512, 1024}\rbrack & \text{constant} & \lbrack\text{1e-4, 1e-2}\rbrack \\
\bottomrule
\end{tabular}
\end{table}

\begin{table}[p]
    \centering
    \caption{
        k-NN hyperparameters for small datasets (Boston Housing, Breast Cancer, Concrete, and Yacht).}
    \vspace{.5em}
    \renewrobustcmd{\bfseries}{\fontseries{b}\selectfont}
    \sisetup{detect-weight,mode=text}
    \label{table:knn_small_hypers}
\begin{tabular}{@{}lccccc@{}}
\toprule
\textit{Hyperparameter} & \texttt{n\_neighbors} & \texttt{weights} & \texttt{algorithm} & \texttt{leaf\_size} & \texttt{p} \\
\midrule
Setting & \lbrack\text{2, 100}\rbrack & \text{uniform, distance} & \text{ball\_tree, kd\_tree, brute} & \lbrack\text{10, 100}\rbrack & 1, 2\\
\bottomrule
\end{tabular}
\end{table}

\begin{table}[t]
    \centering
    \caption{
        k-NN hyperparameters for medium-large datasets (Protein, Kick, Income, Poker Hand).}
    \vspace{.5em}
    \renewrobustcmd{\bfseries}{\fontseries{b}\selectfont}
    \sisetup{detect-weight,mode=text}
    \label{table:knn_med_large_hypers}
\begin{tabular}{@{}lccccc@{}}
\toprule
\textit{Hyperparameter} & \texttt{n\_neighbors} & \texttt{weights} & \texttt{algorithm} & \texttt{leaf\_size} & \texttt{p} \\
\midrule
Setting & \lbrack\text{2, 1000}\rbrack & \text{distance} & \text{auto} & \lbrack\text{10, 100}\rbrack & 2\\
\bottomrule
\end{tabular}
\end{table}

\begin{table}[t]
    \centering
    \caption{
        k-NN hyperparameters for Forest Cover.}
    \vspace{.5em}
    \renewrobustcmd{\bfseries}{\fontseries{b}\selectfont}
    \sisetup{detect-weight,mode=text}
    \label{table:knn_large_hypers}
\begin{tabular}{@{}lccccc@{}}
\toprule
\textit{Hyperparameter} & \texttt{n\_neighbors} & \texttt{weights} & \texttt{algorithm} & \texttt{leaf\_size} & \texttt{p} \\
\midrule
Setting & \lbrack\text{2, 25}\rbrack & \text{distance} & \text{auto} & \lbrack\text{10, 100}\rbrack & 2\\
\bottomrule
\end{tabular}
\end{table}

\FloatBarrier

\section{Societal Impacts of NPT}
\label{app:soc_impact}
We have introduced Non-Parametric Transformers, a novel deep learning architecture that predicts by including learned interactions between points of the dataset.
In this work, we take first steps towards exploring NPTs and their properties.
We do not recommend that NPTs are carelessly applied in production settings, because we do not yet know enough about them.
We now list common concerns in applying machine learning models, discuss how they may apply to NPTs, and how to potentially mitigate them.

Many countries of the world, such as the US, UK, and the countries of the EU, are implementing ``Right to Explanation''-schemes that grant those affected by autonomous decisionmaking the right to an explanation of why and how decisions were made.
In general, Transformer-based architectures such as NPT have been shown to be amenable to explanations, see e.g.,~\citep{NIPS2017_transformer}.
One could argue that our experiments in \S\ref{sec:npt_relies_on_similar} move in an explanatory direction.
However, we have not sufficiently investigated the explanations of individual NPT decisions, and believe this to be exciting future work.

Machine learning models are increasingly used in autonomous decision making that affects human beings in some capacity, e.g., clinical diagnosis, autonomous driving, and detection of toxic comments online.\footnote{For example, see \citep{filos2019systematic,michelmore2018evaluating,van2018challenges}.}
It is of great importance that those decisions are \emph{fair}, i.e., that they do not discriminate against underrepresented groups in some manner.
We have not yet investigated how NPTs respond to common techniques of calibrating machine learning models to fulfil some definition of fairness.
We believe that their special predictive behavior from similar datapoints likely poses both challenges and opportunities in this domain.
For example, instead of needing to retrain the model to elicit changes in prediction -- which could be infeasible in a real-world deployment -- NPT could be ``prompted'' with a different set of context datapoints to modify its predictive behavior towards a more socially desirable response.

In large architectures based on Transformers, the memorization of training data is a common concern.
If the model memorizes training data, adversarial attacks can be used to extract training data from the model weights, see e.g.,~\cite{carlini2020extracting}.
This can lead to violations of privacy if, for example, a publicly available model was trained on data that must remain private.
This can also cause more subtle problems; for example, if training data ``lives on'' in the model but must be deleted at some point in time to comply with privacy regulations.
As NPT directly relies on training data as input for prediction, NPT is not a ``private'' model per definition.
However, we can imagine future work tackling this question; for example, by learning to predict from a set of anonymous representative points instead of the training data directly.

At the model sizes presented in the paper, the environmental impact of training and using NPT is relatively small compared to some of the large architectures currently in fashion, see e.g.,~\citep{brown2020language}.
However, NPT could be scaled up to larger sizes at which point the energy used for training and prediction would become a serious concern.
When considering tabular data, training a \emph{single} NPT model is expensive compared to training a \emph{single} one of our tree-based baselines such as XGBoost.
However, we find that such baselines are often more sensitive to correctly tuned hyperparameters than NPT, such that the total compute including hyperparameter tuning of NPT and the baselines is actually often similar, particularly on larger datasets.
Sparse approximations as referenced in \cref{sec:discussion} may further reduce the computational impact of NPT.

NPT is a new -- and exciting -- architecture.
Therefore, in applications where explanations, fairness, or privacy are desired or legally required, we do not recommend that NPT be used at this stage.

\section{Code, Computational Resources, and License}
\label{app:comp_res}
\paragraph{Code.}
We release code for NPTs at \href{https://github.com/OATML/Non-Parametric-Transformers}{github.com/OATML/Non-Parametric-Transformers}.
The codebase relies on PyTorch \citep{NEURIPS2019_9015} and NumPy \citep{harris2020array}, and we use Scikit-Learn \citep{scikit-learn} for many of the baseline experiments.

\paragraph{Computational Resources.}
For the experiments we mainly rely on a shared internal cluster that has both NVIDIA Titan RTX GPUs (24 GB memory) as well as NVIDIA GeForce RTX 2080 Tis (12 GB memory).
For tuning baselines, which are often compute-heavy workloads, we use Azure D-series compute-optimized VMs.
For small datasets ($< 1000$ datapoints) such as Breast Cancer, training and evaluation of a single NPT model takes about 10 minutes.
For larger datasets such as Protein ($<\num{100000}$ datapoints), training and evaluation of NPT takes about 10 hours.
For the largest datasets, e.g., Higgs Boson with 11 million datapoints, training and evaluation of NPT takes about 5 days.
We did not optimize NPT for efficiency or training speed in this paper and suspect that convergence could be drastically improved with relatively little effort.
The total amount of compute used for this paper is given by all NPT and baseline runs with repetitions for cross-validation, which amounts to more than 30 GPU days.

\paragraph{License Agreements.}
\phantom{123}\\
\textit{License agreement of the CIFAR-10 dataset}:
CIFAR-10 is published under MIT license.

\textit{License agreement of the MNIST dataset}:
License: Yann LeCun and Corinna Cortes hold the copyright of MNIST dataset, which is a derivative work from original NIST datasets. MNIST dataset is made available under the terms of the Creative Commons Attribution-Share Alike 3.0 license.

\textit{UCI Machine Learning Repository}:
Licenses for all datasets can be found at \href{https://archive.ics.uci.edu/ml/index.php}{archive.ics.uci.edu/ml/}.
\end{document}